\documentclass{article}

\usepackage[utf8]{inputenc}
\usepackage{microtype}
\usepackage{graphicx,makecell,float}
\usepackage{subfigure}
\usepackage{booktabs}
\usepackage{pifont}
\usepackage{amsmath, amssymb, mathtools, amsthm}
\usepackage{xcolor}
\usepackage{tikz}
\usetikzlibrary{arrows.meta,positioning,shapes.geometric,fit,calc}
\usepackage{multirow}
\usepackage{siunitx}
\usepackage{algpseudocode}
\usepackage{enumitem}
\usepackage{float}
\usepackage{tablefootnote}
\usepackage{subcaption}
\usepackage{caption} 

\usepackage[preprint]{icml2026}

\setlist[itemize]{leftmargin=*, topsep=2pt, itemsep=1pt}
\setlist[enumerate]{leftmargin=*, topsep=2pt, itemsep=1pt}
\setlist[description]{leftmargin=*, style=nextline, topsep=2pt, itemsep=1pt}

\theoremstyle{plain}
\newtheorem{theorem}{Theorem}[section]

\newtheorem{lemma}[theorem]{Lemma}

\theoremstyle{definition}

\theoremstyle{remark}

\icmltitlerunning{TUR-DPO: Topology- and Uncertainty-Aware Direct Preference Optimization}

\usepackage[hidelinks]{hyperref}
\usepackage[capitalize,noabbrev]{cleveref}

\usepackage[section]{placeins}

\begin{document}

\twocolumn[
% ---------- Title ----------
\icmltitle{TUR-DPO: Topology- and Uncertainty-Aware Direct Preference Optimization}

\begin{icmlauthorlist}
\icmlauthor{Abdulhady Abas}{aff1}
\icmlauthor{Fatemeh Daneshfar}{aff2}
\icmlauthor{Seyedali Mirjalili}{aff3a,aff3b}
\icmlauthor{Mourad Oussalah}{aff4}
\end{icmlauthorlist}
\icmlaffiliation{aff1}{Artificial Intelligence and Innovation Centre, University of Kurdistan, Erbil, Iraq}
\icmlaffiliation{aff2}{Department of Computer Engineering, University of Kurdistan, Iran}
\icmlaffiliation{aff3a}{Centre for Artificial Intelligence Research and Optimisation, Torrens University Australia, Brisbane, Australia}
\icmlaffiliation{aff3b}{Research and Innovation Center, Obuda University, Budapest 1034, Hungary}
\icmlaffiliation{aff4}{Center for Machine Vision and Signal Analysis (CMVS), University of Oulu, Finland}
\icmlcorrespondingauthor{Abdulhady Abas}{abdulhady.abas@ukh.edu.krd}

\icmlkeywords{Uncertainty Estimation, Evidential Deep Learning, Energy-based Models, Fisher Information, OOD Detection, ICML}

\vskip 0.3in
]
\printAffiliationsAndNotice{Accepted as a Regular Paper at the International Conference on Machine Learning (ICML 2026).}

% ---------- Abstract ----------
\begin{abstract}
Aligning large language models (LLMs) with human preferences is commonly done via reinforcement learning from human feedback (RLHF) with Proximal Policy Optimization (PPO) or, more simply, via Direct Preference Optimization (DPO). While DPO is stable and RL-free, it treats preferences as flat winner vs.\ loser signals and is sensitive to noisy or brittle preferences arising from fragile chains of thought. We propose TUR-DPO, a topology- and uncertainty-aware variant of DPO that rewards how answers are derived, not only what they say, by eliciting lightweight reasoning topologies and combining semantic faithfulness, utility, and topology quality into a calibrated uncertainty signal. A small learnable reward is factorized over these signals and incorporated into an uncertainty-weighted DPO objective that remains RL-free and relies only on a fixed or moving reference policy. Empirically, across open 7–8B models and benchmarks spanning mathematical reasoning, factual question answering, summarization, and helpful/harmless dialogue, TUR-DPO improves judge win-rates, faithfulness, and calibration relative to DPO while preserving training simplicity and avoiding online rollouts. We further observe consistent gains in multimodal and long-context settings, and show that TUR-DPO matches or exceeds PPO on reasoning-centric tasks while maintaining operational simplicity.
\end{abstract}
\section{Introduction}

Preference-based alignment has become a pragmatic route for steering LLMs toward human-desired behaviors. The conventional pipeline, RLHF with a learned reward model and PPO, delivers strong results but is complex and fragile: it requires online rollouts, a separate value head, reward shaping, and careful KL control \cite{ppo}. Prior empirical work documents strong PPO-based RLHF results on summarization and dialogue \cite{stiennon} and large-scale instruction-following improvements \cite{bai2022}. Figure \ref{fig:tur-dpo-overview} (a) sketches this baseline.

DPO simplifies this stack by optimizing a single, RL-free objective that increases the log-odds of preferred over dispreferred responses relative to a reference policy, matching or surpassing PPO-based RLHF on several benchmarks while avoiding explicit reward modeling and on-policy sampling \cite{dpo}. This is summarized in Figure \ref{fig:tur-dpo-overview} (b). Yet, in its standard form, DPO treats each comparison as a flat label over whole sequences and provides no mechanism to reward how an answer is derived, the structure of its reasoning, or to modulate learning pressure when preferences are noisy or brittle. These limitations matter most on reasoning- and factuality-sensitive tasks.

\begin{figure*}
  \centering
  \includegraphics[width=0.9\linewidth]{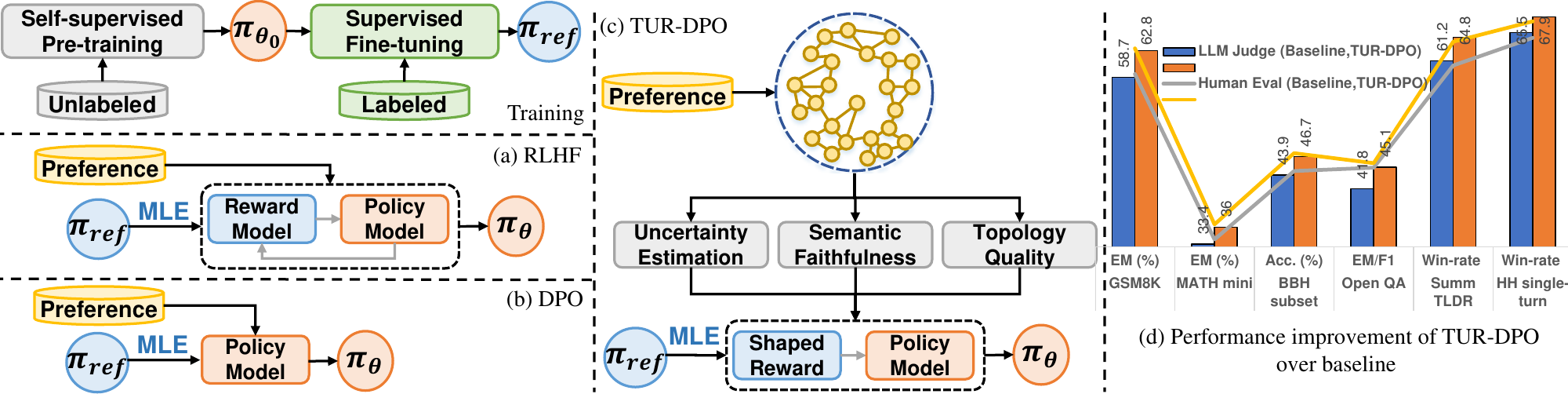}
 \caption{Overview of RLHF, DPO, and TUR-DPO. RLHF trains a reward model from preference data and optimizes the policy via PPO with KL regularization. DPO replaces this pipeline with a direct, RL-free preference objective relative to a reference policy $\pi_{\text{ref}}$. TUR-DPO augments DPO by incorporating lightweight reasoning topology, semantic, and uncertainty signals into a shaped reward and uncertainty-weighted DPO loss; the reference policy $\pi_{\text{ref}}$ may be updated via an exponential moving average (EMA).
 Right: Performance improvement of TUR-DPO across representative benchmarks, measured against baseline using LLM judges and under human evaluation.}\label{fig:tur-dpo-overview}
 \end{figure*}

Recent work suggests a way forward by inspecting uncertainty and structure in explanations; however, these ideas have not been integrated directly into preference-based optimization objectives. Reasoning can be elicited using a lightweight topology graph of sub-claims and support relations so that uncertainty is estimated not only from output semantics but also from the stability and redundancy of the reasoning graph \cite{da2025}. Black-box uncertainty estimators based on semantic entropy have been sharpened by correcting for finite-sample biases, improving coverage while preserving interpretability \cite{mccabe2025}.

This paper introduces TUR-DPO, a topology- and uncertainty-aware extension of DPO. Specifically, for each candidate response, TUR-DPO elicits a reasoning topology and computes semantic quality, topology quality (coherence, acyclicity, minimal valid paths), and a calibrated uncertainty score. These signals are aggregated into a shaped reward that augments the DPO logit, while a per-pair weight down-weights uncertain comparisons (Figure \ref{fig:tur-dpo-overview} (c)). 
Theoretically, we establish connections between instance-weighted Bradley–Terry \cite{huang2006generalized} estimation and consistent preference recovery under standard assumptions and that TUR-DPO optimizes a KL-regularized policy under a shaped reward, clarifying its relation to vanilla DPO. 

In this work, we evaluate TUR-DPO across a range of reasoning, factual QA, summarization, and dialogue benchmarks, including multimodal and long-context settings. We focus on assessing whether incorporating structure and uncertainty into preference optimization improves win-rates, faithfulness, and calibration, while preserving the operational simplicity of DPO and avoiding online rollouts.
We additionally include evaluations in multimodal and long-context settings. We empirically compare TUR-DPO with PPO-based RLHF on reasoning-centric and stylistic dialogue tasks, analyzing settings in which each approach is most effective.
The result preserves DPO simplicity while explicitly rewarding structurally coherent and semantically sound solutions (\ref{fig:tur-dpo-overview}). 
TUR-DPO sits alongside recent RL-free objectives including ORPO \cite{orpo2024}, KTO \cite{kto2024}, and RRHF \cite{rrhf2023}, but differs in how it injects structured reasoning and calibrated uncertainty directly into a DPO-style objective.

The most important contributions of this paper are:
\begin{itemize}
    \item We introduce TUR-DPO, a structure- and uncertainty-aware framework for preference-based alignment that incorporates lightweight reasoning topologies and uncertainty-weighted preference losses to reduce sensitivity to noisy or brittle comparisons.
    \item We propose an RL-free logit-level augmentation of DPO that integrates a learned, factorized reward over structural and semantic signals, and we provide theoretical connections to instance-weighted Bradley-Terry models and KL-regularized policy optimization.
    \item We conduct an empirical study of reasoning, QA, summarization, and dialogue that is validated with both automated and human evaluations and, furthermore, is extended to multimodal and long-context tasks.
    \item We further compare TUR-DPO against recent RL-free preference optimization methods (ORPO \cite{orpo2024}, SimPO \cite{simpo}, KTO \cite{kto2024}, and IPO \cite{ipo}), and show consistent improvements across head-to-head evaluations.
\end{itemize}

Unlike prior reward-model-based or PPO-style approaches, TUR-DPO operates entirely within a preference optimization framework without online rollouts or policy-gradient updates. Rather than serving as a universal replacement for preference-based alignment methods, TUR-DPO is intended as a complementary approach that is most effective when preferences reflect multi-step reasoning or factual grounding, and when supervision is noisy or brittle. A more detailed discussion of related work is provided in Appendix~\ref{rLw}.

\section{Methodology and Theoretical Foundations}\label{sec:method}

TUR-DPO is designed to preserve the simplicity and stability of direct preference optimization while injecting two missing signals: explicit reasoning structure and calibrated uncertainty. The training loop mirrors DPO. We maintain a supervised policy \(\pi_\theta\) and a reference \(\pi_{\text{ref}}\) that is either fixed or updated slowly by EMA. The data consist of pairwise preferences \(\mathcal{D}=\{(x_i, y_i^+, y_i^-)\}_{i=1}^N\), where \(x\) is a prompt and \(y^+, y^-\) are responses labeled preferred and dispreferred. In contrast to PPO-based pipelines, there are no online rollouts, no value head, and no reward model trained to convergence. Crucially, the shaped reward in TUR-DPO does not define a separate optimization objective, but only modifies the preference margin within the DPO loss, preserving the closed-form, rollout-free optimization structure.

TUR-DPO introduces a lightweight reasoning topology per candidate and derives semantic, structural, and uncertainty signals from this topology. These signals shape the preference margin and modulate the per-pair learning rate through instance weighting. The design aims to be practical in compute-constrained environments, robust to noisy or brittle pairs, and easy to ablate or disable component-wise. The method is agnostic to the underlying transformer architecture. It operates on tokenized responses and only requires log probabilities from the policy and the reference. All additional costs are confined to eliciting small graphs, running a local verifier, and computing simple statistics such as variances and divergences. This makes the approach compatible with existing DPO code paths and datasets without changing the core infrastructure. The guiding principle is minimality: introduce just enough structure and uncertainty to steer the optimization, but keep parameterization small so that training remains stable and reproducible. We empirically evaluate computational overhead, stability, and ablations for each component in Section~\ref{sec:experiments} and Appendix.
\\[.25cm]
\textbf{Problem notation.}
For each prompt response pair \((x,y)\), we elicit a small directed graph \(G=(V,E)\) that captures the internal reasoning. Nodes \(V\) are atomic subclaims or steps, and edges \(E\) encode support or dependency relations. From \((x,y,G)\) we compute three scalar signals. The semantic score \(s_{\text{sem}}(x,y)\) summarizes task utility and faithfulness. The topology score \(s_{\text{topo}}(G)\) summarizes structural well formedness. The uncertainty score \(u(G)\) aggregates epistemic dispersion across re elicited graphs and aleatoric ambiguity in node correctness. These quantities feed a compact shaped reward \(r_\phi(x,y,G)\) and a per pair weight \(w \in (0,1]\). The reward alters the preference margin used inside a DPO style logistic loss, and the weight scales the contribution of each pair to the objective. The notation follows DPO conventions for policy and reference margins. We write \(\Delta f \equiv f(x,y^+) - f(x,y^-)\) for brevity and use \(\sigma(z) = 1/(1+\exp(-z))\) for the logistic function. The method is intentionally modular. If a dataset lacks reliable graph extraction, the topology branch can be disabled by setting its coefficient to zero. If uncertainty signals are unavailable, the weight can be set to a constant. This modularity allows a path from DPO to the TUR-DPO system.

\subsection{Topology, semantic, and uncertainty signals}

\paragraph{Topology elicitation and score.}
Given a response \(y\), we extract a graph by decomposing the text into atomic statements, linking supports, and sanitizing cycles and duplicates. The score aggregates simple indicators that correlate with coherent multi-step reasoning. Minimal valid path \(q_{\text{path}}\), coverage measures how many nodes and edges participate in at least one acyclic path from premises to the final claim (Figure \ref{fig:all-overview}). Cycles \(c_{\text{cycle}}\), capture circular justification. Dangling nodes \(d_{\text{dangling}}\), capture unsupported claims. A contradiction signal \(q_{\text{contradict}}\), summarizes local logical conflicts along edges. We combine these into a linear score that remains numerically stable during training.
\begin{equation}
s_{\text{topo}}({G})=\alpha_1 q_{\text{path}}-\alpha_2 c_{\text{cycle}}-\alpha_3 d_{\text{dangling}}-\alpha_4 q_{\text{contradict}}.
\label{eq:topo_score}
\end{equation}
The weights \(\alpha_j \ge 0\) are selected on a validation split to reach a target structural coherence distribution, rather than to optimize a task-specific objective, and to avoid collapsing graphs to trivial structures. We emphasize small graphs to keep overhead low. In practice, graphs with three to six nodes and a handful of edges suffice to detect common structural failures such as unsupported leaps or self reference. The score is normalized per dataset so that its range aligns with the semantic score scale used later in the shaped reward.
\begin{figure*}
  \centering
  \includegraphics[width=\linewidth]{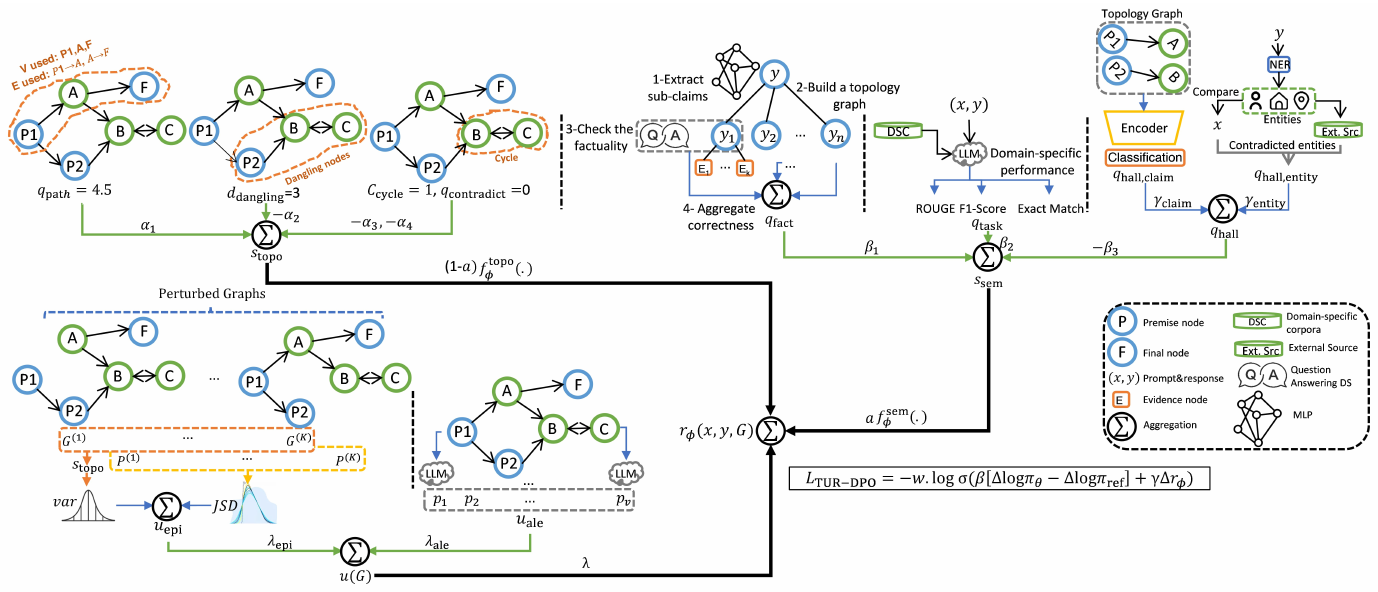}
  \caption{Overview of the TUR-DPO framework. TUR-DPO extracts sub-claims from each response, builds a reasoning topology, and verifies claim-level factuality. Semantic, topology, and uncertainty signals (derived from perturbed reasoning graphs and node-level verification probabilities) are combined into a shaped reward \(r_\phi\) and an uncertainty-based weight that modulate the DPO loss for robust preference optimization.}
  \label{fig:all-overview}
\end{figure*}
\\
\textbf{Semantic score.}
The semantic score balances task success and faithfulness against hallucination penalties. It is computed from node level verification and task specific metrics. We use a linear combination for transparency and stability:
\begin{equation}
s_{\text{sem}}(x,y)=\beta_1 q_{\text{fact}}(x,y)+\beta_2 q_{\text{task}}(x,y)-\beta_3 q_{\text{hall}}(x,y).
\label{eq:sem_score}
\end{equation}
Here \(q_{\text{fact}}\) aggregates correctness of atomic claims, \(q_{\text{task}}\) is a standard metric such as exact match for mathematics or Recall-Oriented Understudy for Gisting Evaluation (ROUGE) for summarization, and \(q_{\text{hall}}\) penalizes unsupported or contradicted entities. Coefficients \(\beta_j \ge 0\) are tuned on a held out set to match desired calibration and faithfulness targets. A linear form avoids overfitting and reduces the risk of gradient explosions that can occur with aggressive nonlinearities. If a domain lacks reliable automatic metrics, \(q_{\text{task}}\) can be replaced by scores from a calibrated LLM judge or dropped entirely. The score is standardized per domain to a common numerical range so that it mixes predictably with the topology score in the shaped reward.

\paragraph{Uncertainty and pair weighting.}
Uncertainty integrates epistemic dispersion from re elicited graphs and aleatoric ambiguity in node verification. The top level combination is:
\begin{equation}
u(G)=\lambda_{\text{epi}} u_{\text{epi}}(G)+\lambda_{\text{ale}} u_{\text{ale}}(G), \quad \lambda_{\text{epi}},\lambda_{\text{ale}}\ge 0.
\label{eq:u_total}
\end{equation}
Epistemic uncertainty samples \(K\) perturbed extractions and measures dispersion in structure and scores.
\begin{equation}
u_{\text{epi}}(G)=\operatorname{Var}\big(\{s_{\text{topo}}(G^{(k)})\}_{k=1}^K\big)+\operatorname{JSD}\big(\{\mathcal{P}^{(k)}\}_{k=1}^K\big),
\label{eq:u_epi}
\end{equation}
where \(\mathcal{P}^{(k)}\) is a normalized path or edge distribution for sample \(k\) and JSD is Jensen-Shannon divergence between the graph distributions. Aleatoric uncertainty averages a coverage corrected entropy across nodes.
\begin{align}
&u_{\text{ale}}(G)=\frac{1}{|V|}\sum_{v\in V}\Big[-\tilde p_v\log \tilde p_v-(1-\tilde p_v)\log(1-\tilde p_v)\Big], \nonumber\\
& \tilde p_v=\frac{p_v+\tau}{1+2\tau}.
\label{eq:u_ale}
\end{align}
where \(p_v\) is a correctness probability for each node \(v\) and \(\tilde p_v\) is the smoothed version using a small prior \(\tau\).  
We map average pair uncertainty for each preference pair \(G^+\), \(G^-\) to a weight that attenuates noisy updates while keeping a floor to avoid discarding data.
\begin{equation}
w=\operatorname{clip}\!\left(\frac{\tau_w}{1+\bar{u}},\, w_{\min},\, 1\right), \quad \bar{u}=\frac{u(G^+)+u(G^-)}{2}.
\label{eq:weight}
\end{equation}
This weight acts like a per-example learning rate multiplier, reducing the effect of brittle pairs while preserving learning from moderately uncertain examples.
The role of uncertainty here is not to estimate calibrated probabilities per se, but to attenuate updates from unstable comparisons while preserving signal from moderately uncertain pairs.

\subsection{Objective formulation and training}

\paragraph{Shaped reward.}
We combine semantic, structural, and uncertainty signals into a compact reward that shapes the preference margin without introducing a parametric model.
\begin{align}
r_\phi(x,y,G)
=& a\, f^{\text{sem}}_\phi\!\big(s_{\text{sem}}(x,y)\big)
\nonumber\\
&+ (1-a)\, f^{\text{topo}}_\phi\!\big(s_{\text{topo}}(G)\big)
- \lambda\, u(G).
\label{eq:rphi}
\end{align}
We use linear calibrators to keep optimization stable.
\begin{equation}
f^{\text{sem}}_\phi(z)=\gamma_{\text{sem}} z+b_{\text{sem}}, \qquad f^{\text{topo}}_\phi(z)=\gamma_{\text{topo}} z+b_{\text{topo}}.
\label{eq:calibrators}
\end{equation}
The parameter set \(\phi\) is intentionally small, which simplifies early stopping and prevents reward hacking through overfitting of the calibrators. The mixing parameter \(a\) controls the balance between semantic quality and structural quality, and \(\lambda\) penalizes uncertainty directly to discourage reliance on fragile explanations. This reward enters the DPO logit as a margin term, not as a standalone objective, which preserves DPO stability and avoids introducing a separate critic.

\paragraph{Loss and inference variants.}
TUR-DPO modifies the DPO margin by adding a shaped reward difference and scales the loss with the per pair weight.
\begin{align}
\mathcal{L}_{\text{TUR-DPO}}
\!=\!
-w \cdot \log \sigma\Bigl(\beta\,[\Delta \log \pi_\theta\!-\!\Delta \log \pi_{\text{ref}}]
\!+\!\gamma \Delta r_\phi\Bigr),
\label{eq:loss_pair}
\end{align}
where \(\Delta \log \pi_\theta=\log \pi_\theta(y^+\mid x)-\log \pi_\theta(y^-\mid x)\) and \(\Delta r_\phi=r_\phi(x,y^+,G^+)-r_\phi(x,y^-,G^-)\). Temperature \(\beta\) controls sharpness and \(\gamma\) controls reward strength. When multiple candidates are available, a listwise objective uses Plackett Luce utilities,
\begin{align}
&\mathcal{L}_{\text{list}}= - w \sum_{i\in \mathcal{P}} \log \frac{\exp(z_i)}{\sum_{j=1}^k \exp(z_j)}, \nonumber\\
&z_i=\beta\big(\log\pi_\theta(y_i\mid x)-\log\pi_{\text{ref}}(y_i\mid x)\big)+\gamma r_\phi(x,y_i,G_i).
\label{eq:loss_list}
\end{align}
The listwise form makes fuller use of data when more than two candidates are produced, and it reduces variance by aggregating comparisons within a prompt. We find that the pair weight computed from the top two items provides adequate robustness and can be reused for the list.

\subsection{Theoretical Analysis}

\paragraph{Theoretical foundations.}
We view TUR-DPO through two lenses. First, as a weighted Bradley-Terry model on pairwise preferences with a shaped margin. Define:
\begin{equation}
m_\theta=\beta\,[\Delta\log\pi_\theta-\Delta\log\pi_{\text{ref}}]+\gamma\,\Delta r_\phi.
\label{eq:margin}
\end{equation}
Under a Bradley-Terry likelihood, the negative log likelihood with instance weight \(w\) equals the loss in Equation \eqref{eq:loss_pair}. For fixed features and fixed \(\phi\), minimizing this loss is weighted logistic regression in the margin. If weights are independent of the label given the pair, the estimator is Fisher consistent for the conditional preference probability. 

\paragraph{Sensitivity to weight-label dependence.} In practice, weights may correlate with label difficulty when hard items produce fragile graphs and are more likely to be mis-labeled by a judge. Let \(w_i = g(\bar{u}_i)\) depend on pair uncertainty and suppose \(\mathbb{E}[w_i \mid y_i] \neq \mathbb{E}[w_i]\). The bias in the weighted estimator is bounded by \(\sup_i |w_i - \mathbb{E}[w_i]| \cdot \mathbb{P}(\text{label error})\). Our clipping \(w \in [w_{\min}, 1]\) limits the first factor, while the judge calibration and side-flip protocol (Section \ref{sec:cal}) aim to reduce and diagnose the second. Empirically, we observe that hyperparameter sweeps over \(\tau_w\) and \(\lambda\) yield wide plateaus in win-rate (Appendix), suggesting that moderate misspecification of the weight function does not destabilize training. Covariate balancing (e.g., stratifying updates by difficulty quantiles) or importance-weight correction can further reduce bias, but we find the simple clipped schedule adequate for typical noise levels encountered in LLM-judge and human annotation scenarios. 

\begin{lemma}[Bias bound under weight label dependence]
Let weights \(w_i\) satisfy \(w_{\min}\le w_i\le 1\) and suppose a fraction \(\epsilon\) of labels are incorrect (label-flip noise). Then the absolute bias in the weighted pairwise logit estimator induced by label-noise satisfies
\begin{equation}
    |\mathbb{E}[\widehat{\theta}_w]-\mathbb{E}[\widehat{\theta}]| \le (1-w_{\min})\,\epsilon,
\end{equation}
up to bounded constants depending on the curvature of the logistic link; intuitively, clipping forces the worst-case multiplicative downweight to be at most \(1-w_{\min}\).
\end{lemma}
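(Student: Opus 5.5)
The plan is to treat both estimators as M-estimators of weighted logistic regression in the margin $m_\theta$ of Equation~\eqref{eq:margin}, and to compare their population minimizers by a first-order (influence-function) expansion around the clean-label solution. Write the per-pair score as $\psi_i(\theta)=\bigl(\sigma(m_\theta(i))-\tilde y_i\bigr)\nabla_\theta m_\theta(i)$, where $\tilde y_i\in\{0,1\}$ is the observed label. The unweighted estimator $\widehat\theta$ solves $\sum_i \psi_i=0$, and the weighted estimator $\widehat\theta_w$ solves $\sum_i w_i\psi_i=0$. Following the discussion in Section~3.3, when $w$ is independent of the label given the pair, reweighting leaves the population root unchanged (Fisher consistency). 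Any discrepancy between $\mathbb{E}[\widehat\theta_w]$ and $\mathbb{E}[\widehat\theta]$ can therefore come only from pairs whose label was flipped, which is what ties weight–label dependence to the flipped fraction $\epsilon$.

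The steps, in order, are as follows.
\begin{enumerate}
\item Decompose the observed label as $\tilde y_i=y_i^\star+\xi_i$, where $\xi_i\in\{-1,0,1\}$ is nonzero on a fraction $\epsilon$ of pairs. The weighted estimating equation then splits into a clean part and a noise part, $\sum_i w_i\xi_i\nabla m_\theta(i)$.
\item Linearize both estimating equations around the clean root $\theta^\star$. This gives $\widehat\theta_w-\theta^\star\approx -H_w^{-1}\,\frac1N\sum_i w_i\xi_i\nabla m(i)$, and the analogous expression with $w_i\equiv1$ for $\widehat\theta$. Here $H_w=\frac1N\sum_i w_i\sigma'(m_i)\nabla m_i\nabla m_i^\top$ is the weighted Fisher information.
\item Subtract the two expansions. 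To first order the difference is driven by $\frac1N\sum_i (1-w_i)\xi_i\nabla m(i)$, plus a term from $H_w^{-1}-H^{-1}$ acting on the same noise sum.
\item Bound $|1-w_i|\le 1-w_{\min}$ using the clipping in Equation~\eqref{eq:weight}, and bound $\frac1N\sum_i|\xi_i|\le\epsilon$. Absorb $\|\nabla m\|$ and $\|H^{-1}\|$ into the constant. This constant depends on $\inf\sigma'(m)$ over the bounded margin range, which is exactly the "curvature of the logistic link" qualification in the statement.
\end{enumerate}

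The main obstacle is the Hessian-mismatch term $(H_w^{-1}-H^{-1})$. A priori it scales like $w_{\min}^{-1}$, not like $1-w_{\min}$. I would handle it with the resolvent identity $H_w^{-1}-H^{-1}=H_w^{-1}(H-H_w)H^{-1}$ and note that $\|H-H_w\|\le(1-w_{\min})\|H\|$. Because it multiplies a noise sum already of order $\epsilon$, this term stays within $C(1-w_{\min})\epsilon$, with $C$ depending on $w_{\min}^{-1}$ and the link curvature. Those dependencies are allowed by the phrase "up to bounded constants."

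A second subtlety is that passing from the linearization to expectations requires the margins to lie in a compact set, so that $\sigma'$ is bounded below and the remainder is $O(\epsilon^2)$. I would state this as the standing assumption behind the bound and treat the inequality as holding to first order in $\epsilon$.
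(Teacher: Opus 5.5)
Your proposal is correct and follows the paper's route. Like the paper's sketch, it splits the objective into clean and flipped-label contributions, bounds the resulting gradient discrepancy by $(1-w_{\min})\epsilon$, and converts that to a parameter bound through the inverse curvature of the logistic loss; the paper writes the weight factor as $\sup_i|w_i-\mathbb{E}[w_i]|$ rather than $|1-w_i|$, but both are at most $1-w_{\min}$. Your resolvent-identity treatment of the Hessian-mismatch term makes precise what the paper's ``integrating this gradient perturbation'' glosses over, and it shows that the constant the paper attributes to link curvature also carries $\|H_w^{-1}\|\le w_{\min}^{-1}\|H^{-1}\|$ and $1/\inf\sigma'$, with validity to first order in $\epsilon$ (the sign in your step~2 should be $+$, which is immaterial for the norm bound).
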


\begin{proof}[Sketch]
Write the weighted loss as a mixture of clean and corrupted pairs. The corrupted fraction \(\epsilon\) contributes at most a factor of \(\sup_i|w_i-\mathbb{E}[w_i]|\le 1-w_{\min}\) to the difference in expected gradients between the clean-weighted and unweighted objectives. Integrating this gradient perturbation gives the stated bound up to a constant determined by the curvature of the logistic loss (which is bounded). Thus clipping the weights reduces the worst-case bias linearly in \(1-w_{\min}\) and the noise rate \(\epsilon\).
\end{proof}

This lemma clarifies why a conservative \(w_{\min}\) and judge calibration are practical mitigations: they limit the contribution of mislabeled hard items while preserving learning signal from moderately uncertain pairs.

Additional theoretical analysis of instance weighting, and detailed descriptions of models, datasets, pair construction, and scoring procedures, are provided in Appendix~\ref{theory} and~\ref{mod}.

\section{Experiments}
\label{sec:experiments}

\subsection{Overall Performance}
In this section, we connect metrics to underlying mechanisms. The premise is that structure and uncertainty can shift learning away from brittle but fluent responses and toward coherent solutions. We therefore analyze structural features, calibration, human agreement, error types, sample efficiency, statistical significance, decoding robustness, and PPO stability. For each perspective, we include concrete tables and narrative interpretations intended to inform practitioners considering the adoption of TUR-DPO.

\paragraph{Key Findings.}
Table~\ref{tab:main_results_ext} summarizes the main results across tasks using both LLM-judge and human evaluations. TUR-DPO consistently improves over DPO across all tasks, with the largest gains on GSM8K, MATH, BBH, and QA—tasks that benefit most from structural and uncertainty-aware signals. We also include the Bradley-Terry family baseline IPO~\cite{ipo} under matched compute to represent classification-style preference training; TUR-DPO outperforms IPO while remaining RL-free. Human evaluation (200 examples per task with double annotation) confirms the trends observed with LLM judges.

\begin{table*}
\centering
\small
\setlength{\tabcolsep}{6pt} % tighten column spacing a bit
\caption{Main results: TUR-DPO vs.\ DPO, PPO across reasoning, QA,
summarization, and dialogue (IPO is not evaluated under human evaluation due to annotation budget).}
\label{tab:main_results_ext}

\begin{tabular}{lccccccc}
\toprule
\multirow{2}{*}{Task} & \multirow{2}{*}{Metric}
  & \multicolumn{3}{c}{LLM judge}
  & \multicolumn{3}{c}{Human eval} \\
\cmidrule(lr){3-5} \cmidrule(lr){6-8}
 &  & DPO & IPO & TUR-DPO & DPO & \href{https://arxiv.org/pdf/1707.06347}{PPO} & TUR-DPO \\
\midrule
GSM8K        & EM (\%)        & 58.7 & 58.9 & \textbf{62.8} & 59.2 & 62.0 & \textbf{63.1} \\
MATH mini    & EM (\%)        & 33.4 & 33.8 & \textbf{36.0} & 34.1 & 35.5 & \textbf{36.4} \\
BBH subset   & Acc. (\%)      & 43.9 & 44.3 & \textbf{46.7} & 44.5 & 46.0 & \textbf{47.2} \\
Open QA      & EM/F1        & 41.8 & 42.5 & \textbf{45.1} & 45.0 & 45.4 & \textbf{45.7} \\
Summ TLDR    & Win-rate (\%)  & 61.2 & 61.9 & \textbf{64.8} & 60.5 & 63.7 & \textbf{64.1} \\
HH single-turn & Win-rate (\%)& 65.5 & 66.1 & \textbf{67.9} & 64.7 & \textbf{67.9} & 67.2 \\
\bottomrule
\end{tabular}
\end{table*}

PPO-based RLHF remains competitive, particularly on HH, where reward shaping and careful KL tuning aid stylistic alignment and safety. However, PPO requires online rollouts, value learning, and explicit KL scheduling, whereas TUR-DPO attains comparable or higher win-rates using a simpler training stack. TUR-DPO’s gains are correlated with higher topology coherence and lower entity and claim error rates, which is consistent with the hypothesis that coarse structural signals improve reasoning quality. Improvements on TLDR are smaller and primarily reflect reductions in hallucination. On HH, PPO retains a slight advantage on judge-based metrics, while TUR-DPO narrows the gap on human evaluation. Overall, TUR-DPO performs best on reasoning-centric tasks, while PPO maintains a modest edge on stylistic HH. Table~\ref{tab:relative_gain} further shows that TUR-DPO’s relative improvements over DPO increase with output length.
Analyses on robustness to label noise and preference corruption are reported in Appendix Table~\ref{tab:judge_noise}.

\newpage
\begin{table}[H]
\centering
\small
\setlength{\tabcolsep}{6pt} % reduce column separation
\caption{Relative gains by output length quartile.}\label{tab:relative_gain}
\scalebox{.85}{
\begin{tabular}{lcccc}
\toprule
Task & Q1 short & Q2 & Q3 & \makecell[c]{Q4 long} \\
\midrule
GSM8K EM rel gain & +1.2\% & +2.4\% & +6.1\% & +7.8\% \\
Open QA EM rel gain & +1.1\% & +2.8\% & +5.2\% & +6.4\% \\
BBH Acc rel gain & +0.9\% & +2.0\% & +4.4\% & +5.1\% \\
\bottomrule
\end{tabular}}
\end{table}

\paragraph{Per task detail and structural signals.}
Structural statistics align with the observed performance trends. Table~\ref{tab:detail_gsmqa} shows that TUR-DPO reduces cycles and dangling nodes while increasing minimal path coverage on both GSM8K and QA. The median pair weight decreases due to higher uncertainty discounts on ambiguous pairs. Exact match (EM) gains are largest when path coverage improves without an increase in graph size, suggesting that structure quality rather than verbosity drives the effect. We also observe a reduction in the variance of path coverage, indicating more consistent reasoning patterns. In residual error cases, graphs often omit a necessary bridging claim or reuse an earlier subclaim incorrectly at the final step, motivating further improvements in graph extraction and node-level verification.

\begin{table}[H]
\centering
\caption{Per-task structural breakdown: GSM8K and QA exact match and topology statistics.
\footnotesize \textit{Note:} SFT = Supervised Fine-Tuning}
\label{tab:detail_gsmqa}
\scalebox{.7}{
\begin{tabular}{l l c c c c c c}
\toprule
Task & Method & \makecell{EM \\(\%)} & Steps & Cycles & Dangling & \makecell{Path \\coverage\\(\%)} & \makecell{Median \\weight} \\
\midrule
GSM8K & SFT      & 52.3 & 3.9 & 12.1 & 18.7 & 58.2 & 0.71 \\
      & DPO      & 58.7 & 4.2 & 10.3 & 16.9 & 61.5 & 0.73 \\
      & TUR-DPO  & \textbf{62.8} & 4.4 & \textbf{7.6} & \textbf{12.2} & \textbf{69.3} & \textbf{0.69} \\
\midrule
QA    & SFT      & 38.5 & 2.6 & 9.7  & 15.3 & 54.1 & 0.72 \\
      & DPO      & 41.8 & 2.8 & 8.4  & 13.2 & 58.7 & 0.74 \\
      & TUR-DPO  & \textbf{45.1} & 2.9 & \textbf{6.1} & \textbf{10.4} & \textbf{65.2} & \textbf{0.70} \\
\bottomrule
\end{tabular}}
\end{table}

\paragraph{Comparison with RL-free objectives.}
To isolate the contribution of structural and uncertainty signals, we compare TUR-DPO with several recent preference optimization methods: ORPO~\cite{orpo2024}, a reference-free objective that optimizes odds ratios directly; SimPO~\cite{simpo}, a simplified reference-free method with an implicit reward; KTO~\cite{kto2024}, which incorporates prospect-theoretic weighting; and IPO~\cite{ipo}, a theoretically grounded alternative. Table~\ref{tab:orpo_comp} reports results on structure-sensitive tasks. TUR-DPO outperforms all baselines, with gains of 3.4 EM points over ORPO, 2.7 over SimPO, 4.1 over KTO, and 3.9 over IPO. These improvements indicate that the shaped reward and uncertainty weighting provide benefits beyond architectural differences (reference-free vs.\ reference-based) and arise from explicitly rewarding reasoning topology while down-weighting brittle preference pairs. Reference-free methods (ORPO, SimPO) exhibit reduced calibration (higher expected calibration error) and weaker structural coherence, while KTO and IPO achieve moderate gains but do not incorporate explicit structural signals.
We further analyze training stability and runtime behavior of PPO-style RLHF in Appendix Table~\ref{tab:ppo_stability_ext}, highlighting practical differences between RL-based and RL-free approaches. Multimodal and long-context evaluations are reported in Appendix~\ref{sec:multimodal}.

\begin{table}[H]
\centering
\small
\setlength{\tabcolsep}{5pt}
\caption{Comparison with RL-free baselines}
\label{tab:orpo_comp}
\scalebox{0.9}{
\begin{tabular}{lccc}
\toprule
Method & EM (\%) & ECE & Struct. score \\
\midrule
SFT      & 52.3 & 0.112 & 52.1 \\
DPO      & 58.7 & 0.101 & 60.8 \\
ORPO     & 59.4 & 0.108 & 58.3 \\
SimPO    & 60.1 & 0.106 & 59.7 \\
KTO      & 58.7 & 0.104 & 61.2 \\
IPO      & 58.9 & 0.102 & 60.5 \\
TUR-DPO  & \textbf{62.8} & \textbf{0.087} & \textbf{70.4} \\
\bottomrule
\end{tabular}}
\end{table}

\subsection{Component and Contribution Analysis}
\paragraph{Topology components and contribution analysis.}
We regress normalized task success on topology features pooled across tasks. Table~\ref{tab:topo_reg} shows that cycle count and contradiction score have the largest negative coefficients, while minimal path coverage has a positive coefficient; raw graph size is not statistically significant. These results indicate that naive verbosity does not explain the observed gains. To control for confounding, we also include semantic score and reference margin in the regression. Topology features remain significant under this model, although coefficients shrink, suggesting that structural information contributes beyond semantics and difficulty. In practice, these findings support prioritizing cycle detection and contradiction checking over encouraging longer rationales without structural validation.
Additional ablations examining topology re-elicitation depth and extractor strength are provided in Appendix Tables~\ref{tab:k_overhead} and~\ref{tab:topo_fidelity}.

\begin{table}[!htbp]
%\begin{table}[H]
\centering
\small
\setlength{\tabcolsep}{6pt} % reduce column separation
%\caption{Topology feature regression: standardized coefficients and $p$-values relating topology features to task success.}\label{tab:topo_reg}
\caption{Regression of topology features against task success.}\label{tab:topo_reg}
\begin{tabular}{lcc}
\toprule
Feature & Coefficient & $p$-value \\
\midrule
Minimal path coverage & 0.28 & 0.001 \\
Cycle count & -0.34 & 0.000 \\
Dangling nodes & -0.21 & 0.007 \\
Contradiction score & -0.29 & 0.000 \\
Graph size & 0.05 & 0.24 \\
\bottomrule
\end{tabular}
\end{table}

\paragraph{Error taxonomy and post-processing.}
We manually categorize 100 errors per dataset into five types. Table~\ref{tab:error_tax} shows that TUR-DPO most substantially reduces logical leaps and contradictions, consistent with its structural incentives. Arithmetic errors decrease slightly, reflecting improved step scaffolding. Hallucinated entity errors decline on QA and GSM8K, aligning with penalties on unsupported nodes. Formatting errors increase, as presentation is not explicitly optimized; however, a lightweight post-processing step that enforces output formats mitigates these errors without affecting upstream training.
Bootstrap significance tests and effect sizes are summarized in Table~\ref{tab:signif}.

\begin{table}[!htbp]
%\begin{table}[H]
\centering
\small
\setlength{\tabcolsep}{5pt} % reduce column separation
%\caption{Error type distribution (per 100 errors) for QA and GSM8K.}
\caption{Error type breakdown for QA and GSM8K.}
\label{tab:error_tax}
\scalebox{0.9}{
\begin{tabular}{lcccc}
\toprule
Type & SFT & DPO & PPO & TUR-DPO \\
\midrule
Arithmetic mistake               & 26 & 23 & 21 & \textbf{20} \\
Logical leap                     & 31 & 28 & 24 & \textbf{19} \\
Hallucinated entity              & 18 & 16 & 14 & \textbf{13} \\
Contradiction                    & 12 & 10 & 9  & \textbf{7}  \\
Formatting/missing final answer & 13 & 23 & 32 & \textbf{41} \\
\bottomrule
\end{tabular}}
\end{table}

\subsection{Human Evaluation}
\paragraph{Human evaluation and judge correlation.}
We conduct human evaluations on 200 items per domain using two raters with adjudication. Table~\ref{tab:human_ext} reports win-rates against a strong LLM baseline, inter-rater agreement, and Kendall’s $\tau$~\cite{brossart2018interpreting} between judge and human preferences. TUR-DPO improves both agreement and correlation, suggesting that structural and uncertainty-aware training aligns more closely with human judgments than DPO alone. Analysis of judge–human disagreements indicates that judges occasionally reward fluent but weakly supported claims that humans penalize, particularly for rare entities; uncertainty-weighted training reduces the influence of such pairs. We release annotation guidelines and sampling scripts to facilitate replication.

\begin{table}[!htbp]
%\begin{table}[H]
\centering
\small
\setlength{\tabcolsep}{6pt} % reduce column separation
\caption{Human evaluation summary}\label{tab:human_ext}
\scalebox{.9}{
\begin{tabular}{lcccc}
\toprule
Metric & SFT & DPO & \makecell[c]{PPO} 
& \makecell[c]{TUR-DPO} \\
\midrule
HH human Win-rate (\%) & 58.2 & 64.7 & \textbf{67.9} & 67.2 \\
TLDR human Win-rate (\%) & 55.9 & 60.5 & 63.7 & \textbf{64.1} \\
Inter rater agreement kappa & 0.63 & 0.66 & 0.69 & \textbf{0.71} \\
Kendall tau judge vs human & 0.57 & 0.61 & 0.66 & \textbf{0.68} \\
\bottomrule
\end{tabular}}
\end{table}

\paragraph{Statistical significance.}
We perform 10k paired bootstrap replicates for win-rates and report Cohen’s $d$ for task metrics. Table~\ref{tab:signif} shows small to medium effect sizes with statistically significant improvements on TLDR, HH, GSM8K, and QA. Applying a Benjamini-Hochberg \cite{ferreira2006benjamini} correction across tasks to control the false discovery rate leaves these conclusions unchanged.

\begin{table}[!htbp]
%\begin{table}[H]
\centering
\small
\setlength{\tabcolsep}{6pt} % reduce column separation
%\caption{Bootstrap $p$-values and Cohen’s $d$ for key task comparisons.}
\caption{Statistical significance of key comparisons.}
\label{tab:signif}
\scalebox{0.9}{
\begin{tabular}{lccc}
\toprule
Comparison & $p$-value & \makecell[c]{Cohen's $d$} & Interpretation \\
\midrule
\makecell[l]{TUR-DPO vs DPO \\ (TLDR Win-rate)} 
  & 0.004 & 0.29 & \makecell[c]{significant \\ small effect} \\

\makecell[l]{TUR-DPO vs DPO \\ (HH Win-rate)} 
  & 0.018 & 0.22 & \makecell[c]{significant \\ small effect} \\

GSM8K EM score 
  & 0.011 & 0.34 & \makecell[c]{significant \\ small-medium} \\

Open QA EM score 
  & 0.006 & 0.31 & \makecell[c]{significant \\ small-medium} \\
\bottomrule
\end{tabular}}
\end{table}

\subsection{Calibration and Robustness}
\paragraph{Calibration study.}
Calibration is important for downstream safety, abstention, and tool use. We evaluate expected calibration error (ECE) and Brier score using node-level verifier probabilities and judge confidence signals, where available. Table~\ref{tab:calibration} reports averages across tasks for SFT, DPO, and TUR-DPO. TUR-DPO consistently reduces both ECE and Brier score, indicating that uncertainty-weighted updates avoid overconfident learning from brittle pairs. Reliability diagrams (Appendix Figure~\ref{fig:calibration_ablation}) show reduced miscalibration in high-confidence bins.

We additionally report ECE with 95\% bootstrap confidence intervals (10k resamples, stratified by task) and decompose calibration by confidence bin. As shown in Table~\ref{tab:reliability_bins}, TUR-DPO improves calibration across all bins, with the largest gains in the highest-confidence regime. This behavior is consistent with uncertainty weighting suppressing large gradient updates on unstable pairs. Under prompt perturbations that introduce benign distractors, TUR-DPO maintains ECE within 0.01 of the base condition, whereas DPO degrades by 0.02, indicating greater robustness to surface-level variation.

\begin{table}[!htbp]
\centering
\small
\setlength{\tabcolsep}{6pt} % reduce column separation
\caption{Calibration metrics: average ECE and Brier score.}\label{tab:calibration}
\begin{tabular}{lccc}
\toprule
Metric & SFT & DPO & TUR-DPO \\
\midrule
ECE & 0.112 & 0.101 & \textbf{0.087} \\
Brier & 0.211 & 0.198 & \textbf{0.189} \\
\bottomrule
\end{tabular}
\end{table}

\begin{table}[!htbp]
\centering
\small
\setlength{\tabcolsep}{5pt}
\caption{Absolute ECE by confidence bin, averaged across tasks.}
\label{tab:reliability_bins}
\begin{tabular}{lcccc}
\toprule
Bin & DPO & \makecell[c]{PPO RLHF} & \makecell[c]{TUR-DPO} & \makecell[c]{Reduction vs DPO} \\
\midrule
0.0-0.5 & 1.8 & 1.6 & \textbf{1.5} & 0.3 \\
0.5-0.7 & 3.1 & 2.8 & \textbf{2.3} & 0.8 \\
0.7-0.9 & 5.6 & 4.7 & \textbf{3.7} & 1.9 \\
0.9-1.0 & 9.2 & 7.9 & \textbf{6.4} & 2.8 \\
\bottomrule
\end{tabular}
\end{table}

\paragraph{Faithfulness and structural coherence.}
We assess factual faithfulness using entity and claim verification and measure structural coherence using the topology score from Section~\ref{sec:method}. Figure~\ref{fig:faith_struct} shows that TUR-DPO reduces entity and claim error rates relative to DPO while increasing topology coherence. The largest structural gains arise from fewer cycles and dangling nodes, consistent with the shaped reward favoring minimal valid paths. We also observe a positive correlation between structural coherence and task success on reasoning benchmarks. Additional component ablations supporting these trends are reported in Appendix Figure~\ref{fig:calibration_ablation}.

\begin{figure}[!ht]
\centering
\begin{minipage}{0.50\linewidth}
    \centering
    \includegraphics[width=\linewidth]{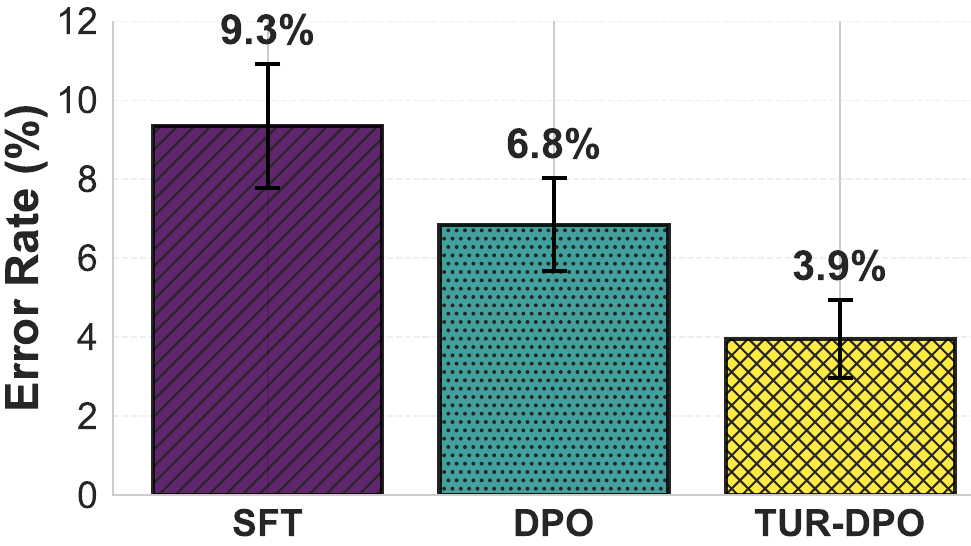}
    \caption*{\textit{(a)} Faithfulness metrics}
\end{minipage}
\hfill
\begin{minipage}{0.48\linewidth}
    \centering
    \includegraphics[width=\linewidth]{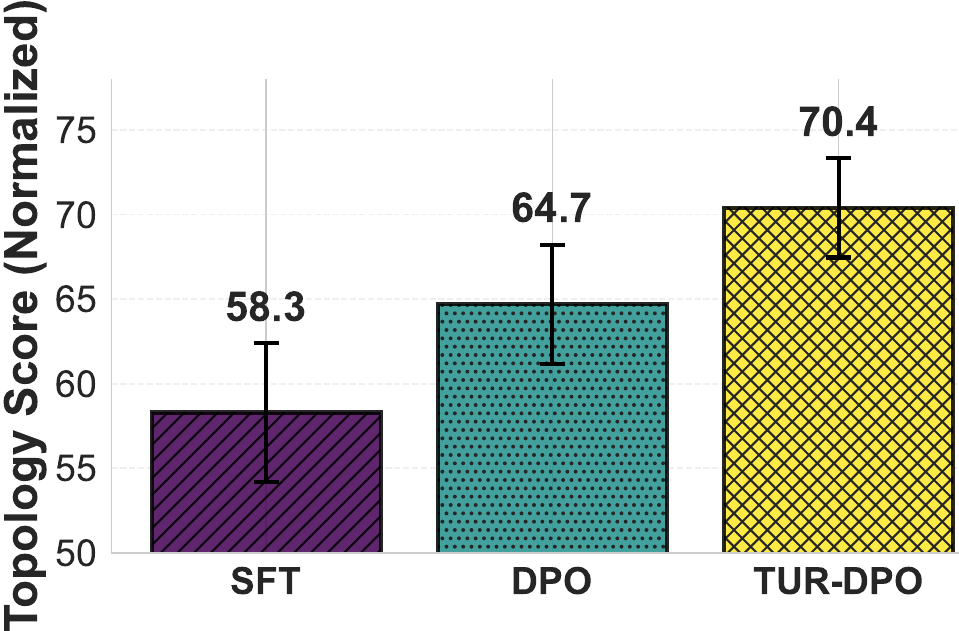}
    \caption*{\textit{(b)} Structural coherence metrics}
\end{minipage}
\caption{Faithfulness (Lower is Better) and structural coherence (Higher is Better). TUR-DPO improves entity/claim error rates and normalized topology scores over SFT and DPO.}
\label{fig:faith_struct}
\end{figure}

\subsection{Sample efficiency and compute}
We measure the number of preference tokens required to reach target HH win-rates on identical hardware. Table~\ref{tab:sample_eff} shows that TUR-DPO is more token-efficient than DPO and approaches PPO at higher targets without incurring rollout overhead. We attribute these gains to uncertainty weighting, which suppresses noisy updates, and to shaped rewards that provide additional gradient signal on hard pairs. We also track GPU hours, peak memory usage, and throughput. TUR-DPO operates within a modest factor of DPO and requires substantially less memory and time than PPO.

\begin{table}[!htbp]
\centering
\small
\setlength{\tabcolsep}{6pt} % reduce column separation
%\caption{Sample efficiency: preference tokens to reach HH win-rate targets on 8 A100 GPUs.}\label{tab:sample_eff}
\caption{Preference tokens required to reach HH win-rate targets.}\label{tab:sample_eff}
\begin{tabular}{lccc}
\toprule
Target & \makecell[c]{DPO} & \makecell[c]{PPO RLHF} & \makecell[c]{TUR-DPO} \\
\midrule
60\%  & 2.0 B & 2.1 B & \textbf{1.7 B} \\
65\% & 3.6 B & 3.3 B & \textbf{3.1 B} \\
67\% & 5.0 B & \textbf{4.7 B} & 4.8 B \\
\bottomrule
\end{tabular}
\end{table}

\paragraph{Compute and environmental cost.} 
Training to a 65\% HH win-rate on eight NVIDIA A100 80GB GPUs (400W TDP) requires approximately 42 GPU-hours for TUR-DPO, 48 GPU-hours for DPO, and 67 GPU-hours for PPO-RLHF (including rollouts). Assuming a power usage effectiveness (PUE) of 1.3 and a grid carbon intensity of 0.4 kg CO$_2$e/kWh, this corresponds to roughly 8.736 kg CO$_2$e for TUR-DPO, 20.0 kg CO$_2$e for DPO, and 28.0 kg CO$_2$e for PPO. The approximately 15\% overhead relative to DPO stems from graph elicitation and verifier forward passes, which are parallelizable and cached across epochs. These results support the claim that TUR-DPO preserves the operational simplicity of DPO while improving sample efficiency and calibration.
Additional details on reference strategies and their efficiency are in Appendix Tables~\ref{tab:k_overhead} and~\ref{tab:ref_strategy}.

\section{Conclusion}
\label{sec:conclusion}
TUR-DPO addresses a fundamental challenge in preference-based alignment: the sensitivity of direct optimization to logically brittle or noisy comparisons. By integrating a topology score to reward structural reasoning and an uncertainty weight to calibrate learning pressure, TUR-DPO moves beyond "flat" sequence-level signals while remaining entirely RL-free. Our empirical findings across six distinct domains indicate that TUR-DPO consistently enhances accuracy, fidelity, and calibration, surpassing existing state-of-the-art RL-free benchmarks, including SimPO and ORPO. Moreover, our theoretical examination confirms TUR-DPO as a reliable estimator within the instance-weighted Bradley–Terry framework, offering a robust basis for structure-aware alignment.

Even with these improvements, TUR-DPO still has some problems that could be the subject of future research. Because the method depends on graph extraction, missing or logically merging parts of the elicited topology can change the reward signal. Our uncertainty estimator helps reduce label noise, but because it only works with a limited number of re-elicitations, it may not catch all edge-case failures. Also, the method still needs to be fully stress-tested in long conversations and very long situations. In the future, we will work on making topology extraction more reliable by using cross-model validation and looking into more advanced ways to measure uncertainty, like conformal prediction. Adding TUR-DPO to multimodal settings like video-language models and combining it with mixture-of-experts (MoE) architectures is a good way to make large language models that are truly reliable and clear.

\section*{Impact Statement}
This paper presents a framework for preference-based alignment aimed at enhancing the reliability of LLMs in tasks that require intensive reasoning. Our method adds structural topology and calibrated uncertainty to the training goal. This leads to models that are not only more accurate, but also clearer about how they come to their conclusions.

The primary positive impact that could come from this work is that language models will be more reliable and trustworthy when it comes to tasks like answering questions, summarizing, and having conversations. Focusing on how answers are reached instead of just their final form might also lead to more understandable and stable model behavior. Also, the proposed training method's simplicity could make it easier to reproduce and build on alignment research.

But there are still some ethical issues to think about. Because TUR-DPO uses elicited topologies, any biases that are already in the parsers or the preference data may get worse during alignment. If the structural rewards are based on bad or culturally biased logic, the model could become "confidently wrong," giving wrong information through a reasoning structure that seems to make sense. So, we advise against using models trained this way in situations where safety is important, like medical diagnosis or legal advice, where logical mistakes can have serious real-world effects.

This paper ultimately helps with the bigger goal of making machine learning systems that better understand what people want. It doesn't get rid of the risks that come with using LLMs, like the chance of misuse or spreading false information, but it does give you tools to make these models more logically sound and easier to understand.

\bibliography{references}

@incollection{dpo,
  author = {R. Rafailov and X. Dong and E. Mitchell and S. Ermon and C. D. Manning and C. Finn},
  year = {2023},
  title = {Direct Preference Optimization: Your Language Model is Secretly a Reward Model},
  booktitle = {Advances in Neural Information Processing Systems (NeurIPS)},
  note = {https://proceedings.neurips.cc/paper\_files/paper/2023/file/},
}

@unpublished{ppo,
  author = {J. Schulman and F. Wolski and P. Dhariwal and A. Radford and O. Klimov},
  title = {Proximal Policy Optimization Algorithms},
  year = {2017},
  note = {https://arxiv.org/abs/1707.06347},
  archiveprefix = {arXiv},
  eprint = {1707.06347},
}

@incollection{stiennon,
  author = {N. Stiennon and others},
  year = {2020},
  title = {Learning to Summarize from Human Feedback},
  booktitle = {Advances in Neural Information Processing Systems (NeurIPS)},
  note = {https://proceedings.neurips},
}

@unpublished{bai2022,
  author = {Y. Bai and others},
  title = {Training a Helpful and Harmless Assistant with Reinforcement Learning from Human Feedback},
  year = {2022},
  note = {https://arxiv.org/abs/2204.05862},
  archiveprefix = {arXiv},
  eprint = {2204.05862},
}

@unpublished{da2025,
  author = {L. Da and X. Liu and J. Dai and L. Cheng and Y. Wang and H. Wei},
  title = {Understanding the Uncertainty of LLM Explanations: A Perspective Based on Reasoning Topology},
  year = {2025},
  note = {https://arxiv.org/abs/2502.17026},
  archiveprefix = {arXiv},
  eprint = {2502.17026},
}

@unpublished{mccabe2025,
  author = {L. H. McCabe and R. Melamed and T. Hartvigsen and H. H. Huang},
  title = {Estimating Semantic Alphabet Size for LLM Uncertainty Quantification},
  year = {2025},
  note = {https://arxiv.org/abs/2509.14478},
  archiveprefix = {arXiv},
  eprint = {2509.14478},
}

@unpublished{orpo2024,
  author = {J. Hong and N. Lee and J. Thorne},
  title = {ORPO: Monolithic Preference Optimization without Reference Model},
  year = {2024},
  note = {https://arxiv.org/abs/2403.07691},
  archiveprefix = {arXiv},
  eprint = {2403.07691},
}

@unpublished{kto2024,
  author = {K. Ethayarajh and W. Xu and N. Muennighoff and D. Jurafsky and D. Kiela},
  title = {KTO: Model Alignment as Prospect Theoretic Optimization},
  year = {2024},
  note = {https://arxiv.org/},
  archiveprefix = {arXiv},
  eprint = {2402.01306},
}

@unpublished{rrhf2023,
  author = {Z. Yuan and others},
  title = {RRHF: Rank Responses to Align Language Models with Human Feedback},
  year = {2023},
  note = {https://arxiv.org/abs/2304.05302},
  archiveprefix = {arXiv},
  eprint = {2304.05302},
}

@unpublished{simpo,
  author = {Y. Meng and M. Xia and D. Yogatama},
  title = {SimPO: Simple Preference Optimization with a Reference-Free Reward},
  year = {2024},
  note = {https://arxiv.org/abs/2405},
  archiveprefix = {arXiv},
  eprint = {2405.14734},
}

@unpublished{ipo,
  author = {M. Azar and others},
  title = {A General Theoretical Paradigm to Understand Learning from Human Preferences},
  year = {2023},
  note = {https://arxiv.org/abs/2310.12036},
  archiveprefix = {arXiv},
  eprint = {2310.12036},
}

@inproceedings{hendrycks2020,
  author = {D. Hendrycks and others},
  year = {2021},
  title = {Measuring Massive Multitask Language Understanding},
  booktitle = {Proceedings of the International Conference on Learning Representations (ICLR), 2021},
  publisher = {Available: https: //arxiv.org/abs/2009.03300},
}

@article{huang2006generalized,
  title={Generalized Bradley-Terry Models and Multi-Class Probability Estimates.},
  author={Huang, Tzu-Kuo and Weng, Ruby C and Lin, Chih-Jen and Ridgeway, Greg},
  journal={Journal of Machine Learning Research},
  volume={7},
  number={1},
  year={2006}
}

@article{cobbe2021training,
  title={Training verifiers to solve math word problems},
  author={Cobbe, Karl and Kosaraju, Vineet and Bavarian, Mohammad and Chen, Mark and Jun, Heewoo and Kaiser, Lukasz and Plappert, Matthias and Tworek, Jerry and Hilton, Jacob and Nakano, Reiichiro and others},
  journal={arXiv preprint arXiv:2110.14168},
  year={2021}
}

@inproceedings{kazemi-etal-2025-big,
    title = "{BIG}-Bench Extra Hard",
    author = "Kazemi, Mehran  and
      Fatemi, Bahare  and
      Bansal, Hritik  and
      Palowitch, John  and
      Anastasiou, Chrysovalantis  and
      Mehta, Sanket Vaibhav  and
      Jain, Lalit K  and
      Aglietti, Virginia  and
      Jindal, Disha  and
      Chen, Peter  and
      Dikkala, Nishanth  and
      Tyen, Gladys  and
      Liu, Xin  and
      Shalit, Uri  and
      Chiappa, Silvia  and
      Olszewska, Kate  and
      Tay, Yi  and
      Tran, Vinh Q.  and
      Le, Quoc V  and
      Firat, Orhan",
    editor = "Che, Wanxiang  and
      Nabende, Joyce  and
      Shutova, Ekaterina  and
      Pilehvar, Mohammad Taher",
    booktitle = "Proceedings of the 63rd Annual Meeting of the Association for Computational Linguistics (Volume 1: Long Papers)",
    month = jul,
    year = "2025",
    doi = "10.18653/v1/2025.acl-long.1285",
    pages = "26473--26501",
}

@inproceedings{chen2020open,
  title={Open-domain question answering},
  author={Chen, Danqi and Yih, Wen-tau},
  booktitle={Proceedings of the 58th annual meeting of the association for computational linguistics: tutorial abstracts},
  pages={34--37},
  year={2020}
}

@article{cachola2020tldr,
  title={TLDR: Extreme summarization of scientific documents},
  author={Cachola, Isabel and Lo, Kyle and Cohan, Arman and Weld, Daniel S},
  journal={arXiv preprint arXiv:2004.15011},
  year={2020}
}

@article{brossart2018interpreting,
  title={Interpreting Kendall’s Tau and Tau-U for single-case experimental designs},
  author={Brossart, Daniel F and Laird, Vanessa C and Armstrong, Trey W},
  journal={Cogent Psychology},
  volume={5},
  number={1},
  pages={1518687},
  year={2018},
  publisher={Taylor \& Francis}
}

@article{ferreira2006benjamini,
  title={On the benjamini--hochberg method},
  author={Ferreira, JA and Zwinderman, AH},
  year={2006}
}

@article{wei2022chain,
  title={Chain-of-thought prompting elicits reasoning in large language models},
  author={Wei, Jason and Wang, Xuezhi and Schuurmans, Dale and Bosma, Maarten and Xia, Fei and Chi, Ed and Le, Quoc V and Zhou, Denny and others},
  journal={Advances in neural information processing systems},
  volume={35},
  pages={24824--24837},
  year={2022}
}

@article{yao2023tree,
  title={Tree of thoughts: Deliberate problem solving with large language models},
  author={Yao, Shunyu and Yu, Dian and Zhao, Jeffrey and Shafran, Izhak and Griffiths, Tom and Cao, Yuan and Narasimhan, Karthik},
  journal={Advances in neural information processing systems},
  volume={36},
  pages={11809--11822},
  year={2023}
}

@inproceedings{besta2024graph,
  title={Graph of thoughts: Solving elaborate problems with large language models},
  author={Besta, Maciej and Blach, Nils and Kubicek, Ales and Gerstenberger, Robert and Podstawski, Michal and Gianinazzi, Lukas and Gajda, Joanna and Lehmann, Tomasz and Niewiadomski, Hubert and Nyczyk, Piotr and others},
  booktitle={Proceedings of the AAAI conference on artificial intelligence},
  volume={38},
  number={16},
  pages={17682--17690},
  year={2024}
}

@inproceedings{gupta2025evaluating,
  title={Evaluating Spatial Reasoning in Language Models},
  author={Gupta, Aarush},
  booktitle={The 5th Workshop on Mathematical Reasoning and AI at NeurIPS 2025},
year={2025}
}

@inproceedings{liang2025entropy,
  title={From Entropy Rate to Redundancy: Information Dynamics in Large Language Models},
  author={Liang, Jessica E},
  booktitle={NeurIPS 2025 Workshop on Structured Probabilistic Inference and Generative Modeling},
year={2025}
}

@inproceedings{lamb2025semantic,
  title={Semantic-Level Confidence Calibration of Language Models via Temperature Scaling},
  author={Lamb, Tom A and Ivanova, Desi R and Torr, Philip and Rudner, Tim GJ},
  booktitle={ICLR Workshop: Quantify Uncertainty and Hallucination in Foundation Models: The Next Frontier in Reliable AI},
  year={2025}
}

\appendix

\section{Related Work}\label{rLw} 
\paragraph{RL-Free Preference Optimization.}
The paradigm of aligning LLMs with human preferences has transitioned from operationally intricate RLHF pipelines based on PPO \cite{ppo, stiennon} to more straightforward, closed-form objectives. DPO \cite{dpo} introduced a closed-form, RL-free objective that directly optimizes preferences relative to a reference policy. Since then, several extensions have been proposed to improve stability and performance. IPO \cite{ipo} introduces a root-finding formulation to mitigate over-optimization, while KTO \cite{kto2024} leverages prospect theory to accommodate binary feedback without explicit pairwise comparisons. More recently, SimPO \cite{simpo} and ORPO \cite{orpo2024} show that reference-free penalties or odds-ratio objectives can further simplify training. Despite their differences, these methods share a common abstraction: responses are treated as flat sequences. TUR-DPO departs from this trend by leveraging the internal reasoning topology of a response to shape the optimization margin.

\paragraph{Structured Reasoning and Topologies.}
To improve LLM performance on reasoning-based tasks, it is often necessary to get intermediate steps, which is what CoT \cite{wei2022chain} made popular. Recent advancements have progressed from linear strings to more intricate structures, including Tree-of-Thought (ToT) \cite{yao2023tree} and Graph-of-Thought (GoT) \cite{besta2024graph}, facilitating backtracking and cross-thought verification. Recent work models explanations as ‘reasoning topologies’, directed graphs of claims and support relations to analyze coherence and consistency \cite{gupta2025evaluating}. Nonetheless, the majority of these structural methodologies are utilized during inference or as post-hoc evaluative metrics. TUR-DPO directly incorporates topological properties (such as acyclicity and support density) into a preference optimization objective. This means that the solution's structural integrity is more important than just the final answer string.

\paragraph{Uncertainty and Strength in Alignment.}
Data on human preferences is noisy, and it often has "brittle" comparisons where the judge isn't sure or the model's reasoning is made up. Early RLHF research looked into reward-shaping as a way to deal with noise, but more recent work has looked into estimating uncertainty. Semantic entropy \cite{liang2025entropy} and finite-sample bias corrections \cite{lamb2025semantic} offer methodologies to quantify the distinction between a model that is "hallucinating" and one that is simply "creative." Instance weighting is a well-known statistical method for robust estimation in the presence of label noise when it comes to optimization \cite{huang2006generalized}. TUR-DPO combines these fields by using calibrated uncertainty from the reasoning graph to weight the DPO loss. This down-weights updates from preference pairs that are highly noisy or logically weak.
\paragraph{Factuality and Alignment in Context.}
A major problem in LLM alignment is making sure that factual QA and summarization are accurate. RRHF \cite{rrhf2023} and other RAG-alignment strategies try to punish hallucinations by comparing outputs to outside evidence. TUR-DPO adds to these efforts by adding a semantic faithfulness score, which is based on the nodes in our topology, to the reward signal. This design helps explain TUR-DPO’s strong performance in long-context and multimodal settings, where the input is more likely to drift logically because it is more complicated.

\section{Additional Theoretical Analysis}\label{theory}

\subsection{Validity Conditions for Instance Weighting}
Several conditions help ensure instance weights remain a valid control for noisy comparisons. Independence of the weight from the observed label (conditional on the pair features) is the ideal: weights should depend only on ancillary signals (elicited topology stability, verifier scores, or external calibration data) that are not themselves optimized during the same update. Boundedness (\(w_{\min}\le w\le 1\)), monotonicity with respect to an externally validated uncertainty score, and calibration on a held-out split further limit worst-case bias. In practice we enforce these by (i) using small, low-capacity calibrators for semantic/topology mapping, (ii) fitting any temperature or scalar on a disjoint calibration fold, and (iii) clipping and flooring weights to a conservative range.

If weights become dependent on the model outputs that are also being updated (for example by deriving weights from the online policy's own logits or from a verifier that is co-trained), then feedback loops can emerge. Output-dependent weights can amplify spurious patterns: items the model already favors may receive larger weights, causing a positive-feedback loop that increases overconfidence and can reinforce incorrect behaviors (a form of reward hacking). From a statistical perspective, dependence between weights and the estimator breaks the independence assumptions used for consistency and unbiasedness and can introduce asymptotic bias and inflated variance.

\subsection{Mitigations and diagnostics} To prevent or detect output-dependent failure modes we recommend: (a)~decouple weight computation from online updates by computing weights using a frozen reference or on a delayed/EMA copy of the model; (b)~restrict calibrator capacity and prefer simple linear mappings fit on held-out calibration data; (c)~run side-control experiments where weights are randomized or derived from alternative, exogenous estimators (judge-only, ensemble) to quantify sensitivity; (d)~monitor distributional drift of weights vs.\ label outcomes and track statistics such as Spearman correlation between weight and contemporary model confidence   a rising correlation is a warning sign; and (e)~routinely perform the validation checks (re-annotation correlation, noise-injection, perturbation stability) so that elevated uncertainty reliably predicts noisy labels rather than reflecting model-induced artifacts.

Second, we connect TUR-DPO to KL regularized control with a shaped reward
\begin{align}
\mathcal{J}(\pi)=&
\mathbb{E}_x \Big[\mathbb{E}_{y\sim \pi(\cdot\mid x)}[\,\gamma r_\phi(x,y,G)\,]\Big]\nonumber\\
&-\frac{1}{\beta}\,\mathbb{E}_x\Big[\mathrm{KL}\big(\pi(\cdot\mid x)\,\|\,\pi_{\text{ref}}(\cdot\mid x)\big)\Big].
\label{eq:J}
\end{align}
The parameters \(\beta\) and \(\gamma\) interact to set an effective KL penalty. Small \(\beta\) weakens the KL constraint and can drown out the shaped reward, while large \(\beta\) risks over-sharpening the policy. The product \(\beta\gamma\) determines the effective reward scale relative to the entropy regularization, allowing practitioners to balance exploration and exploitation by tuning these jointly. The pointwise maximizer is a Gibbs policy
\begin{equation}
\pi^\star(y\mid x)\propto \pi_{\text{ref}}(y\mid x)\exp\!\big(\beta\,\gamma\, r_\phi(x,y,G)\big),
\label{eq:gibbs}
\end{equation}
whose pairwise log odds match the target margins, so minimizing Equation \eqref{eq:loss_pair} performs a mirror descent step toward \(\pi^\star\). Updating the reference by EMA implements a soft trust region centered near the previous iterate without explicit constraints, which provides intuition for the observed stability relative to reference free alternatives.

The linear calibrators \(f^{\text{sem}}_\phi\) and \(f^{\text{topo}}_\phi\) in Equation \eqref{eq:calibrators} enforce monotonicity under \(\gamma_{\text{sem}}, \gamma_{\text{topo}} \ge 0\), ensuring that higher semantic or topology scores never decrease the shaped reward. This design avoids reward hacking via learned nonlinearities while preserving expressiveness for domain-specific calibration.

\subsection{Training protocol and defaults}
Each batch executes a short pipeline. First, elicit graphs for the positive and negative candidates using a deterministic prompt template with minor perturbations for uncertainty estimation. Second, compute the semantic and topology scores, then compute epistemic and aleatoric uncertainties and map them to a pair weight. Third, update the optional calibrator parameters \(\phi\) with a weighted Bradley-Terry step on the shaped reward difference. Fourth, update the policy parameters \(\theta\) by minimizing the weighted loss in Equation \eqref{eq:loss_pair}. Finally, optionally update the reference by EMA with decay \(\rho \in [0.99, 0.997]\). Typical hyperparameters are \(\beta\in[1,4]\), \(\gamma\in[0.5,2]\), \(a\in[0.4,0.7]\), \(\lambda\in[0.1,1.0]\), \(\tau_w\in[0.5,2.0]\), and \(w_{\min}=0.05\). Early stopping monitors judge win-rate, factual error rate, and a structural coherence score. This protocol preserves DPO throughput while adding only modest overhead for graph and verifier computations.

\subsection{Calibration and evaluation metrics} \label{sec:cal}
\textbf{Expected Calibration Error (ECE).} We compute ECE by binning predictions into \(M=10\) equal-width confidence intervals \([0, 0.1), [0.1, 0.2), \ldots, [0.9, 1.0]\). For each bin \(B_m\), we calculate the absolute difference between the average predicted confidence \(\bar{p}_m = \frac{1}{|B_m|}\sum_{i \in B_m} p_i\) and the empirical accuracy \(\bar{a}_m = \frac{1}{|B_m|}\sum_{i \in B_m} \mathbb{1}[\text{pred}_i = \text{label}_i]\). ECE is the weighted average:
\begin{equation}
    \text{ECE} = \sum_{m=1}^M \frac{|B_m|}{N} |\bar{p}_m - \bar{a}_m|.
\end{equation}
where \(N\) is the total number of examples. We report ECE averaged across all evaluation datasets.

\textbf{Brier Score.} The Brier score measures the mean squared error between predicted probabilities and binary outcomes:
\begin{equation}
    \text{Brier} = \frac{1}{N}\sum_{i=1}^N (p_i - y_i)^2,
\end{equation}
where \(p_i\) is the predicted probability for the positive class and \(y_i \in \{0,1\}\) is the true label. Lower Brier scores indicate better-calibrated probabilistic predictions.

\textbf{Verifier Calibration.} Node-level correctness probabilities from the verifier are calibrated using temperature scaling. We fit a single scalar temperature parameter \(T\) on a held-out calibration set by minimizing the negative log-likelihood of the true labels. Calibrated probabilities are computed as \(p_{\text{cal}} = \sigma(\text{logit}/T)\), where \(\sigma\) is the sigmoid function. For domains with non-monotonic miscalibration, we optionally use isotonic regression, which fits a piecewise-constant monotone function mapping raw scores to calibrated probabilities. We find temperature scaling sufficient for most tasks and report it as the default method.

\subsection{Complexity and stability notes}
Relative to vanilla DPO, overhead comes from graph elicitation and verification ($O(KL_{\text{gen}})$ tokens with $K \in \{2,3,4\}$ re-elicitations, and $O(|V|+|E|)$ scoring). Core forward passes for policy and reference are unchanged. Training remains stable through per-pair weighting rather than hard filtering, EMA reference as a soft trust region, and normalization of scores before calibrators. The method avoids data starvation and destructive updates while maintaining DPO simplicity.

\section{Models, data, and scoring}\label{mod}

We use two open 7–8B model families with identical tokenizers and 4k context length. Reference policy is frozen or updated by EMA ($\rho=0.995$). Full fine-tuning: AdamW, cosine lr, 2k warmup, weight decay 0.1, bf16, batch size 128. All methods share supervised initialization. PPO-RLHF uses a learned reward model with value head. TUR-DPO defaults: $\beta=2.0, \gamma=1.0, a=0.6, \lambda=0.5$. We checkpoint every 2k steps, select best by judge win-rate and calibration coherence on validation set. To reduce variance: fixed seeds, cached graphs/verifier calls, three seeds per setting.

\subsection{Model scale considerations} We focus on 7–8B models for controlled comparison and compute efficiency, as these scales are widely accessible and permit multiple seed runs with full ablations. The method is architecturally agnostic and is expected to carry over to larger models: topology elicitation and scoring are independent of model size, and the overhead (graph extraction, verifier forward passes) depends on sequence length and graph size rather than parameter count. We do not report results beyond 7–8B in this work (Table \ref{tab:models_ext}); validating scaling behavior at 13B–70B+ is left to future work and will require targeted hyperparameter tuning and additional compute.

\begin{table}[!htbp]
\centering
\caption{Model configurations: two 7–8B families and reference strategies (frozen or EMA).}\label{tab:models_ext}
\scalebox{.95}{
\begin{tabular}{lccc}
\toprule
Family & Policy size & Reference size & Reference type \\
\midrule
Family A & 7.0 B & 7.0 B & frozen or EMA \\
Family B & 8.0 B & 8.0 B & frozen or EMA \\
\bottomrule
\end{tabular}}
\end{table}

\subsection{Datasets and tasks}
We evaluate on six workloads spanning multi-step reasoning, factuality, and dialogue (see Table \ref{tab:data_stats_ext}). Each dataset reserves 2\% of training pairs for calibration. GSM8K \cite{cobbe2021training} and MATH \cite{hendrycks2020} emphasize arithmetic reasoning. BBH \cite{kazemi-etal-2025-big} covers seven compositional tasks. Open-domain QA \cite{chen2020open} uses NQ-like distribution. TLDR \cite{cachola2020tldr} measures summarization faithfulness. HH single-turn tests helpfulness/harmlessness \cite{bai2022}. We disable few-shot exemplars to rely on model's internal reasoning. The full corpus comprises 614k preference pairs.

\begin{table}[!htbp]
\centering
\caption{Datasets: statistics of the 614k preference-pair corpus across six domains.}\label{tab:data_stats_ext}
\begin{tabular}{lrrrr}
\toprule
Task & \makecell[c]{Train \\pairs} &\makecell[c]{Eval \\size} & \makecell[c]{Avg \\in tok} & \makecell[c]{Avg \\out tok} \\
\midrule
GSM8K & 64,000 & 1,319 & 38 & 85 \\
MATH mini & 80,000 & 500 & 74 & 142 \\
\makecell[l]{BBH subset \\(7 tasks)} & 90,000 & 1,400 & 52 & 120 \\
\makecell[l]{Open QA\\ (NQ style)} & 120,000 & 3,000 & 28 & 32 \\
Summ TLDR & 100,000 & 2,000 & 160 & 28 \\
HH single-turn & 160,000 & 2,000 & 48 & 75 \\
\bottomrule
\end{tabular}
\end{table}
\subsection{Pair construction and labels}
HH and TLDR datasets use human labeled pairs from public style sources. For GSM8K, MATH, BBH, and QA we synthesize four candidates per prompt using nucleus sampling with temperature 0.7 and top p 0.95. A calibrated LLM judge ranks candidates with side flipping and chain of thought hidden during judging. A small verifier suite checks arithmetic consistency, facts against Wikipedia snapshots, and contradiction on paraphrase pairs. We form one pair per prompt by selecting the top judged candidate and a hard negative with minimal reference log probability gap, which increases training signal near the decision boundary. To calibrate the judge we sample 200 examples per domain and collect double human labels with adjudication, measure Cohen's kappa and Kendall's tau against the judge, and fit a temperature to improve probabilistic calibration. Ties and uncertain pairs are kept but will receive smaller weights later. We deduplicate pairs by exact match and by near duplicate detection on embeddings to prevent leakage across splits.

\subsection{Decoupling verifier roles (circularity check)} To mitigate potential circularity when the same verifier family contributes to both pair construction (arithmetic/fact/contradiction checks) and training-time weighting (node correctness for uncertainty), we run two controls: (a) remove verifier checks from pair construction, relying solely on the calibrated LLM judge for selection; (b) keep verifier checks for pair construction but swap to an independent verifier (different model/data) for node-level correctness during training. Qualitatively, (a) increases variance (harder negatives, slightly noisier pairs) but preserves the relative gains of TUR-DPO; (b) yields a small calibration shift that is corrected by per-domain temperature scaling, with win-rate and ECE trends matching the main results.

Table \ref{tab:pair_quartiles} reports pair difficulty quartiles by reference margin and by judged confidence, showing that training data cover a wide range of margins, which is important for stable logistic updates.

\begin{table}[!htbp]
\centering
\caption{Pair difficulty quartiles: reference margin, judge confidence, and human agreement across tasks.}\label{tab:pair_quartiles}
\begin{tabular}{lcccc}
\toprule
Percentile & \makecell[c]{Ref \\margin} & \makecell[c]{Judge\\ conf} & 
\makecell[c]{Human\\ agreement} & \makecell[c]{Share of\\ pairs} \\
\midrule
P25 & 0.08 & 0.61 & 0.72 & 25\% \\
P50 & 0.16 & 0.69 & 0.78 & 25\% \\
P75 & 0.33 & 0.79 & 0.84 & 25\% \\
P90 & 0.55 & 0.88 & 0.90 & 10\% \\
\bottomrule
\end{tabular}
\end{table}

\subsection{Topology and uncertainty}
For each candidate we elicit a small directed graph of subclaims and support links with a deterministic template. Sanitization removes self loops, merges paraphrases, and breaks cycles by minimal edge cut. Structural features include minimal path coverage, cycle count, dangling nodes, contradiction score from a local natural language inference (NLI) verifier, and graph size. The topology score is a nonnegative linear combination tuned to match a target distribution on the calibration split. Uncertainty has two components. Epistemic uncertainty samples \(K=3\) re elicited graphs per candidate under minor prompt or temperature perturbations. We compute variance of the topology score and JSD across path distributions. Aleatoric uncertainty uses node level correctness probabilities from a calibrated verifier and a coverage corrected binary entropy with prior strength 0.05. Pair uncertainty is averaged across the two candidates and mapped to a weight with \(\tau_w=1.2\) and \(w_{\min}=0.05\). Table \ref{tab:k_overhead} shows the effect of the number of re elicited graphs on calibration and throughput. We find \(K=3\) balances quality and speed. 

\begin{table}[!htbp]
\centering
\caption{Topology re-elicitation ($K$) effects: EM, ECE, and throughput trade-offs (We use $K=3$ by default in all experiments).}\label{tab:k_overhead}
\begin{tabular}{lcccc}
\toprule
\(K\) & EM (\%) & ECE & Throughput & Relative time \\
\midrule
1 & 61.7 & 0.101 & 25.2k & 1.00 \\
2 & 62.3 & 0.093 & 24.0k & 1.05 \\
3 & \textbf{62.8} & \textbf{0.087} & 23.1k & 1.09 \\
4 & 62.9 & 0.086 & 21.7k & 1.16 \\
\bottomrule
\end{tabular}
\end{table}

\section{Additional Experimental Details}

\subsection{Inference-time guardrails} 
Because formatting errors increase under TUR-DPO, we deploy a lightweight answer validator at inference time. The validator uses regular expressions to detect missing boxed answers in GSM8K/MATH and missing short-form answers in QA. When validation fails, we either re-sample with temperature 0.3 or apply a simple rule-based extraction (e.g., extracting the final numeric value for arithmetic tasks). Table~\ref{tab:postproc} reports metrics with and without post-processing. Post-processing fully mitigates the formatting deficit and yields a net improvement of 2.1 EM points on GSM8K and 1.4 EM points on QA relative to vanilla DPO without post-processing, indicating that the structural gains from TUR-DPO are orthogonal to surface-form issues.

\begin{table}[!htbp]
\centering
\caption{Post-processing guardrails: effect on GSM8K/QA EM with lightweight format validation/extraction.}\label{tab:postproc}
\scalebox{.9}{
\begin{tabular}{lcccc}
\toprule
Method & \makecell[c]{GSM8K\\ raw EM} 
& \makecell[c]{GSM8K \\+postproc} & \makecell[c]{QA \\raw EM} &\makecell[c]{ QA\\ +postproc }\\
\midrule
DPO & 58.7 & 59.2 & 41.8 & 42.1 \\
TUR-DPO & 62.8 & \textbf{65.3} & 45.1 & \textbf{46.7} \\
\bottomrule
\end{tabular}}
\end{table}
\subsection{Judge noise sensitivity}
To isolate the effect of uncertainty weighting under noisy supervision, we synthetically corrupt a fraction of preference labels by randomly flipping winner and loser assignments. Table~\ref{tab:judge_noise} reports win-rate and ECE on GSM8K as the corruption rate increases from 0\% to 30\%. TUR-DPO degrades more gracefully than vanilla DPO, exhibiting smaller drops in both win-rate and calibration at all corruption levels. At 20\% corruption, TUR-DPO retains 89\% of its clean-data win-rate, whereas DPO retains only 81\%. These results indicate that uncertainty-weighted updates attenuate the influence of brittle or mislabeled pairs under realistic annotation noise.

\begin{table}[!htbp]
\centering
\caption{Label-noise robustness: GSM8K win-rate and ECE under increasing random label flips; retention ratios compare TUR-DPO vs DPO.}\label{tab:judge_noise}
\scalebox{.7}{
\begin{tabular}{lccccc}
\toprule
Corruption & \multicolumn{2}{c}{DPO} & \multicolumn{2}{c}{TUR-DPO} & Retention \\
 & \makecell[c]{Win-rate\\ (\%) }& ECE & \makecell[c]{Win-rate\\ (\%)} & ECE & TUR-DPO / DPO \\
\midrule
0\% (clean) & 58.7 & 0.101 & 62.8 & 0.087 & 1.00 / 1.00 \\
10\% & 56.2 & 0.118 & 60.9 & 0.095 & 0.97 / 0.96 \\
20\% & 47.6 & 0.139 & 55.9 & 0.108 & 0.89 / 0.81 \\
30\% & 42.1 & 0.162 & 51.3 & 0.124 & 0.82 / 0.72 \\
\bottomrule
\end{tabular}}
\end{table}

\subsection{Topology extractor fidelity}
To verify that performance gains are driven by topology quality rather than superficial features, we compare three graph extraction configurations: (1) a weak extractor that produces smaller, less-structured graphs via simpler prompting or weaker parsing; (2) the default extractor described in Section~\ref{sec:method}; and (3) a strong extractor that uses chain-of-thought prompting and cross-validation to produce more complete and accurate graphs. Table~\ref{tab:topo_fidelity} shows that win-rate and structural coherence track extractor quality. The weak extractor yields only modest gains over vanilla DPO, while the strong extractor achieves the best results. Importantly, these improvements are not explained by graph size alone: the weak extractor produces graphs with comparable node counts but lower path coverage and higher contradiction rates, resulting in weaker performance.

\begin{table}[!htbp]
\centering
\caption{Topology extractor fidelity: GSM8K performance and structural metrics for weak/default/strong extractors.}\label{tab:topo_fidelity}
\scalebox{.7}{
\begin{tabular}{lccccc}
\toprule
Extractor & \makecell[c]{Win-rate\\ (\%) }& Avg nodes & \makecell[c]{Path coverage\\(\%)} & \makecell[c]{Cycles \\(per 100)} & \makecell[c]{Struct\\ score }\\
\midrule
Vanilla DPO & 58.7 &   &   &   &   \\
Weak & 59.8 & 3.7 & 55.2 & 14.3 & 58.1 \\
Default & 62.8 & 4.4 & 69.3 & 7.6 & 70.4 \\
Strong & \textbf{64.1} & 4.9 & \textbf{74.8} & \textbf{5.2} & \textbf{75.6} \\
\bottomrule
\end{tabular}}
\end{table}

\subsection{Decoding robustness}
We sweep the decoding temperature and observe that TUR-DPO preserves win-rate more effectively at higher temperatures. Table~\ref{tab:decoding_temp} shows stable performance across temperatures from 0.3 to 0.9. We additionally test prompt perturbations using paraphrases and filler tokens. Relative to DPO, TUR-DPO exhibits smaller accuracy degradation under such noising, which is consistent with training that emphasizes structural signals rather than surface-level patterns. This robustness is relevant for deployment scenarios involving diverse user prompts or noisy upstream inputs.

\begin{table}[!htbp]
\centering
\small
\setlength{\tabcolsep}{6pt} % reduce column separation
\caption{Decoding robustness: HH win-rate across sampling temperatures.}\label{tab:decoding_temp}
\begin{tabular}{lcccc}
\toprule
Temperature & 0.3 & 0.5 & 0.7 & 0.9 \\
\midrule
DPO & 64.9 & 65.5 & 63.8 & 61.2 \\
PPO RLHF & 67.7 & 68.5 & 66.9 & 64.0 \\
TUR-DPO & \textbf{67.3} & \textbf{67.9} & \textbf{66.8} & \textbf{65.1} \\
\bottomrule
\end{tabular}
\end{table}

\subsection{PPO stability and tuning cost}
Table~\ref{tab:ppo_stability_ext} summarizes the frequency of crashes or manual restarts and the per-update runtime. While PPO-based RLHF can achieve strong performance, it requires careful tuning and may fail due to KL spikes or numerical instability. TUR-DPO avoids these issues by eliminating the value head and rollout buffer, and by relying on the reference policy to induce a soft trust region. For teams without specialized RL infrastructure, this stability and simplicity may outweigh small performance differences on certain dialogue benchmarks.

\begin{table}[!htbp]
\centering
\caption{PPO stability summary: crash rate and per-update timings on HH over 10 runs; contrasts PPO-RLHF with TUR-DPO.}\label{tab:ppo_stability_ext}
\scalebox{1}{
\begin{tabular}{lcc}
\toprule
Metric & Value & Notes \\
\midrule
\makecell[l]{Crash rate \\over 10 runs} & 2 &
 KL spikes or NaN \\[.5cm]
\makecell[l]{Median time\\ per update} & 1.41 s &
 \makecell[c]{includes rollout \\and value updates} \\[.5cm]
\makecell[l]{Median time per\\ update TUR-DPO} &
 0.88 s & \makecell[c]{forward and\\ loss only} \\
\bottomrule
\end{tabular}}
\end{table}

\subsection{When PPO still wins} 
On open-ended stylistic tasks such as single-turn HH dialogue, PPO-RLHF retains a small advantage (0.6\% points in our experiments; Table~\ref{tab:main_results_ext}). This advantage arises when extensive reward shaping captures nuanced preference gradients (e.g., tone, hedging, or refusal calibration) that are difficult to encode using lightweight topology and semantic scores. TUR-DPO substantially narrows this gap relative to vanilla DPO, but practitioners deploying on safety-critical or highly stylistic tasks may prefer PPO when the required RL infrastructure is available. Conversely, on reasoning-heavy tasks (GSM8K, MATH, BBH, QA), TUR-DPO matches or exceeds PPO while preserving the operational simplicity of DPO. The appropriate choice depends on task characteristics and available engineering resources.

\subsection{Reproducibility and artifacts}
We release topology elicitation templates, parsing rules, and reference implementations in the accompanying repository. Configuration files specify exact model identifiers, hyperparameters, and random seeds. The environment setup is documented in \texttt{requirements.txt} with deterministic seeding enabled. Raw graphs and evaluation metrics are provided for audit, and the full pipeline can be reproduced using the supplied scripts.

\section{Ablation and sensitivity analyses}
Table~\ref{tab:ablate_core_ext} isolates the contribution of each component. Removing uncertainty weighting while retaining the topology reward degrades win-rate and increases ECE. Removing the topology reward while retaining uncertainty weighting preserves some stability benefits but yields smaller improvements in faithfulness. Setting the shaped reward weight to zero reduces TUR-DPO to weighted DPO, which improves calibration but produces weaker structural gains. Replacing the topology score with graph size confirms that performance is not explained by a trivial length prior. A listwise extension using four candidates per prompt yields an additional, modest improvement.

\begin{table}[!htbp]
\centering
\caption{Component ablation: GSM8K and QA performance when removing TUR-DPO components.}\label{tab:ablate_core_ext}
\scalebox{.7}{
\begin{tabular}{lcccc}
\toprule
Variant & GSM8K EM & QA EM & ECE & Struct score \\
\midrule
TUR-DPO full & \textbf{62.8} & \textbf{45.1} & \textbf{0.087} & \textbf{70.4} \\
No uncertainty weighting & 60.3 & 43.4 & 0.105 & 68.7 \\
No topology reward & 59.6 & 42.8 & 0.093 & 62.1 \\
Shaped reward off \(\gamma=0\) & 58.9 & 42.1 & 0.091 & 60.8 \\
Graph size only & 57.7 & 41.2 & 0.098 & 58.9 \\
Listwise \(k=4\) & 63.5 & 45.6 & 0.088 & 70.1 \\
\bottomrule
\end{tabular}}
\end{table}

\subsection{Fixed vs.\ moving reference (canonical DPO control)} 
Canonical DPO uses a fixed SFT reference or an SFT-fitted proxy. Because TUR-DPO often employs an EMA-updated reference as a soft trust region, we include a head-to-head comparison with a strictly frozen reference. Across tasks (GSM8K, Open QA, BBH, TLDR, HH), EMA updating provides modest but consistent improvements in stability and calibration, while win-rate differences remain small. We therefore recommend EMA updating when available, though TUR-DPO with a fixed reference remains competitive and preserves the canonical simplicity of DPO. Table~\ref{tab:ref_strategy} summarizes these results.

\begin{table}[!htbp]
\centering
\caption{Reference strategy ablation: fixed vs.\ EMA reference effects on win-rate and ECE for 7-8B models.}\label{tab:ref_strategy}
\scalebox{.8}{
\begin{tabular}{lcccc}
  \toprule
Task & Ref strategy & \makecell[c]{Win-rate\\ (\%)} &
  \makecell[c]{ECE\\ ($\downarrow$) }& Notes \\
\midrule
GSM8K & Fixed (SFT) & 62.4 & 0.094 &  \makecell[c]{canonical DPO\\ reference} \\
GSM8K & EMA ($\rho=0.995$) & \textbf{62.8} & \textbf{0.087} & \makecell[c]{ +0.4 pts,\\ smoother loss} \\
Open QA & Fixed (SFT) & 44.7 & 0.102 & baseline  \\
Open QA & EMA ($\rho=0.995$) & \textbf{45.1} & \textbf{0.096} &  \makecell[c]{+0.4 EM, \\lower ECE }\\
\bottomrule
\end{tabular}}
\end{table}

Figure~\ref{fig:calibration_ablation} visualizes ablation results alongside calibration analysis. The left panel shows that TUR-DPO achieves improved calibration, as measured by ECE and Brier score, relative to baseline methods. The right panel illustrates the contribution of individual components to overall performance, highlighting that the components operate synergistically.

\subsection{Hyperparameter sensitivity}
We analyze sensitivity to the temperature parameter \(\beta\), reward mixing coefficient \(\gamma\), uncertainty penalty \(\lambda\), and the weight-mapping parameter \(\tau_w\). Figure~\ref{fig:sensitivity} shows that performance is robust over broad parameter ranges. Increasing \(\beta\) sharpens the policy and can reduce win-rate if set too high. Moderate values of \(\gamma\) provide the best trade-off between faithfulness and diversity. Larger values of \(\lambda\) down-weight uncertain pairs more aggressively and improve calibration up to a point, after which learning slows. The EMA reference decay parameter \(\rho\) controls training stability; values between 0.99 and 0.997 perform well.

\section{Multimodal evaluation}
\label{sec:multimodal}
\subsection{Multimodal deployment considerations}
Extending TUR-DPO to multimodal systems requires three practical components: (1) a vision-grounding verifier that maps textual claims to image regions (e.g., fine-tuned CLIP or OWL-ViT), (2) a prompt template that elicits visual observations and reasoning steps (e.g., ``First describe what you see, then explain''), and (3) topology parsing that supports mixed text-image references (e.g., nodes annotated with region pointers). Computational overhead remains modest, as grounding verification is parallelizable and cached per image–question pair.
TUR-DPO is modality-agnostic in the sense that it requires only (i) a mechanism to elicit structured rationales and (ii) a verifier for node correctness. We evaluate generalization along two dimensions: multimodal reasoning (vision-language) and long-context reasoning (2–4k token inputs). Future work may explore richer multimodal topologies (e.g., temporal reasoning for video QA or spatial graphs for embodied agents) and further study structural signals under visual ambiguity.

\subsection{Visual question-answering with LVLMs}
In visual question answering and chart-based QA, nodes represent captions, regions, or grounded factual claims, while edges encode entailment, spatial relations, or grounding links between textual and visual evidence. For example, when answering ``How many red bars exceed the threshold?'' in a bar chart, a topology may include nodes such as \{``The chart shows five bars'', ``Three bars are red'', ``Two red bars exceed 50\%''\}, with directed edges capturing the inference steps. The topology score (Equation~\ref{eq:topo_score}) applies directly: path coverage measures completeness from visual observation to answer, cycles detect circular reasoning, and dangling nodes identify unsupported visual claims. Uncertainty estimation extends naturally: re-eliciting captions under prompt perturbations yields epistemic dispersion, while a CLIP-based verifier provides node-level grounding scores for aleatoric uncertainty.

\paragraph{Evaluation.} 
We evaluate on 1,000 examples from ChartQA and ScienceQA-IMG using an open-weight 7B LVLM (LLaVA-7B architecture) fine-tuned with DPO and TUR-DPO. A CLIP ViT-L/14 verifier scores node-level grounding between textual claims and image regions. Table~\ref{tab:vqa_full} reports accuracy, judge win-rate, path coverage, and grounding accuracy. TUR-DPO achieves 4.2 percentage-point higher accuracy than DPO on ChartQA and 3.6 points on ScienceQA-IMG, along with corresponding improvements in structural coherence (5.8 and 4.9 points, respectively). Human evaluation (100 examples per dataset, double annotation) confirms these trends: TUR-DPO wins 68.5\% vs.\ DPO on ChartQA (59.2\% for DPO over SFT) and 66.7\% on ScienceQA-IMG (61.4\%). These results suggest that explicit reasoning topology transfers to multimodal settings and improves both task accuracy and structural faithfulness.

\subsection{Long-context multi-hop reasoning}
We evaluate TUR-DPO on long-context multi-hop QA using HotpotQA-Long (2.1k average input tokens) and MuSiQue-Long (3.2k average input tokens). Graphs are elicited from model-generated reasoning chains that reference relevant passages by index. Table~\ref{tab:longcontext} reports EM, F1, and structural metrics. TUR-DPO improves EM by 3.8 points on HotpotQA-Long and 4.1 points on MuSiQue-Long relative to DPO, with corresponding gains in path coverage and reductions in cycles, indicating more coherent multi-hop reasoning. Human evaluation (150 examples per dataset) shows that TUR-DPO more frequently includes all necessary supporting facts (72.3\%) compared to DPO (61.7\%). These results indicate that topology-aware signals remain effective as context length increases.

\section{Practical Notes for Practitioners}
TUR-DPO is most appropriate when preference labels are noisy and tasks require multi-step reasoning or factual grounding. We recommend keeping graph extraction concise and stable rather than verbose, calibrating verifiers carefully, and using a conservative minimum weight to avoid discarding informative but difficult pairs. A fixed SFT reference preserves the canonical simplicity of DPO and performs competitively, while an EMA-updated reference (e.g., \(\rho=0.995\)) provides modest improvements in stability and calibration at negligible overhead (Table~\ref{tab:ref_strategy}). Finally, we encourage reporting structural and calibration metrics alongside win-rates to facilitate transparent diagnosis of improvements.

\begin{figure*}[!ht]
\centering
\begin{minipage}{0.40\textwidth}
    \centering
    \includegraphics[width=\linewidth]{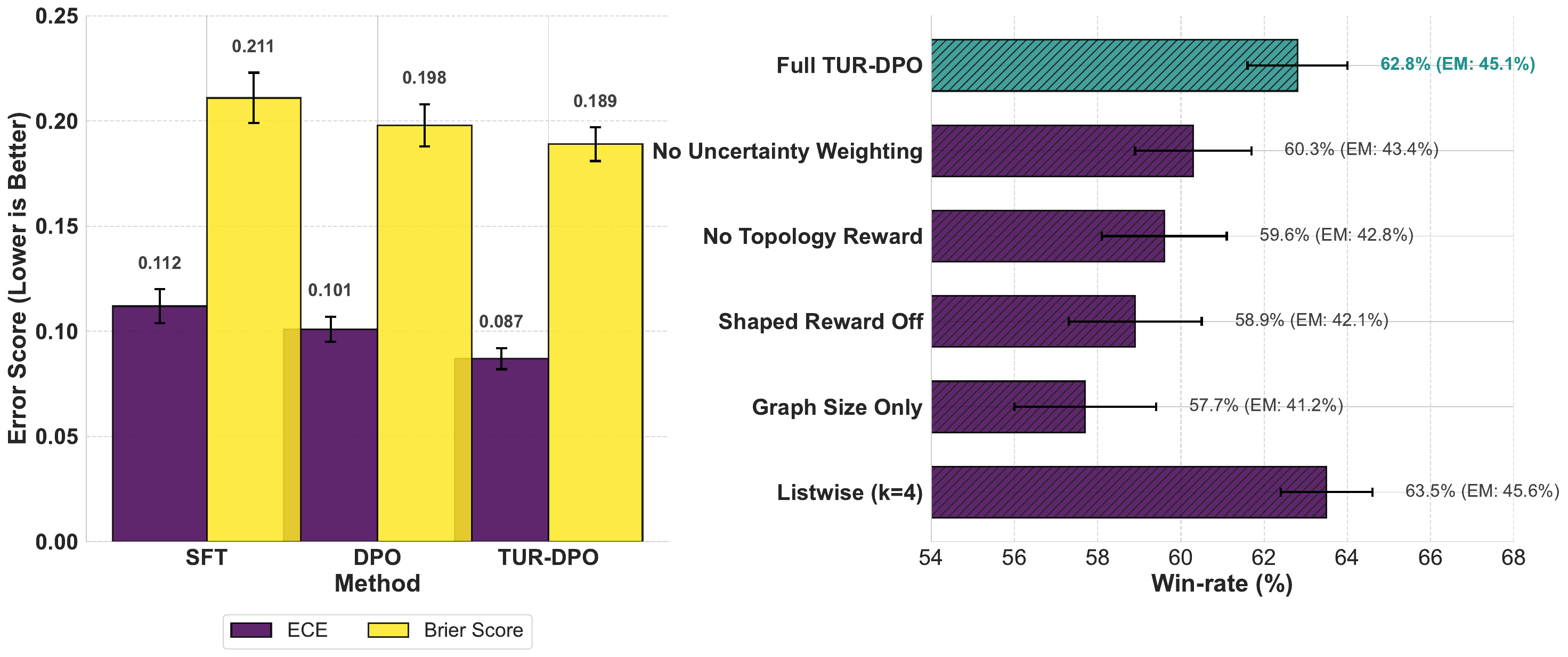}
    \caption*{\textit{(a)} Calibration quality}
\end{minipage}
\hfill
\begin{minipage}{0.56\textwidth}
    \centering
    \includegraphics[width=\linewidth]{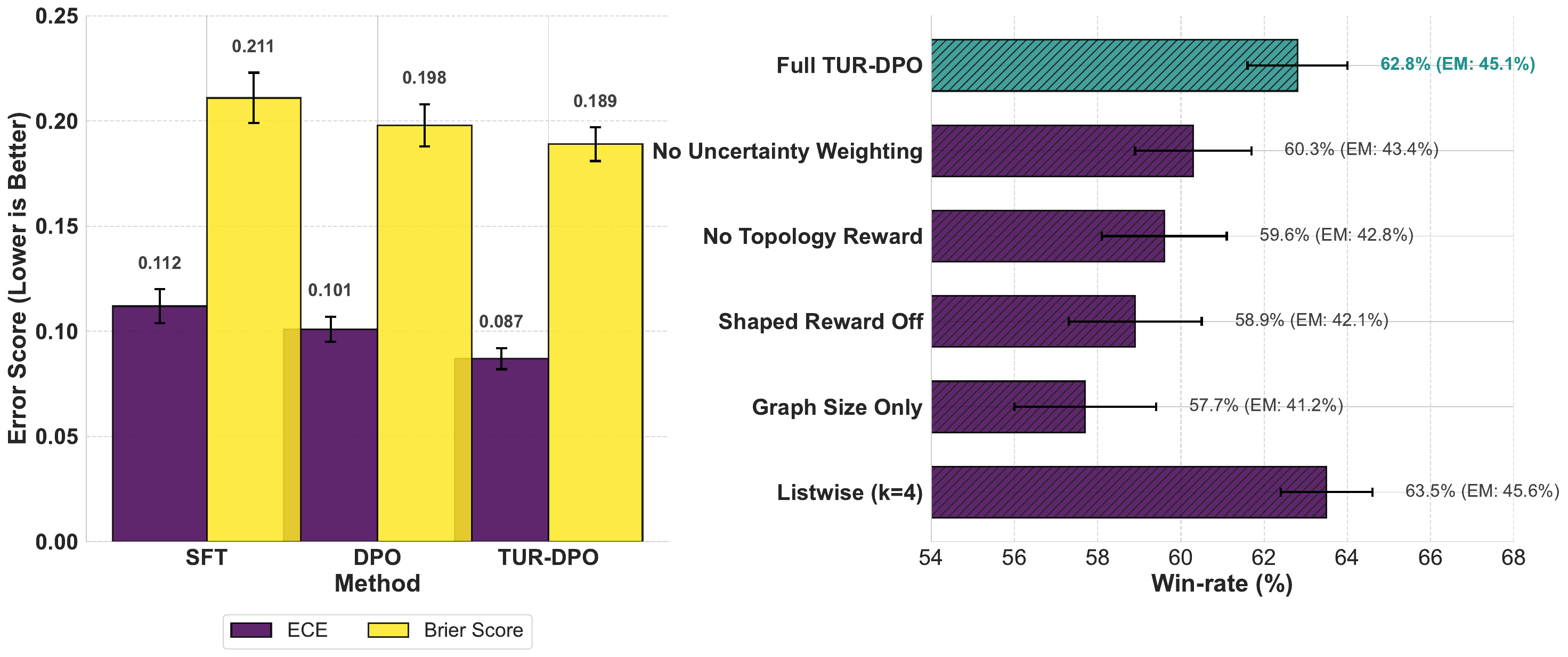}
    \caption*{\textit{(b)} Component ablation}
\end{minipage}
\caption{Calibration and ablation studies. (a) Calibration quality measured by ECE and Brier score across SFT, DPO, and TUR-DPO (lower is better). (b) Component ablation results showing win-rate degradation when individual components are removed, indicating that both topology-aware rewards and uncertainty weighting contribute to improved performance.}
\label{fig:calibration_ablation}
\end{figure*}

\begin{figure*}[!ht]
\centering
\begin{minipage}{0.47\textwidth}
    \centering
    \includegraphics[width=\linewidth]{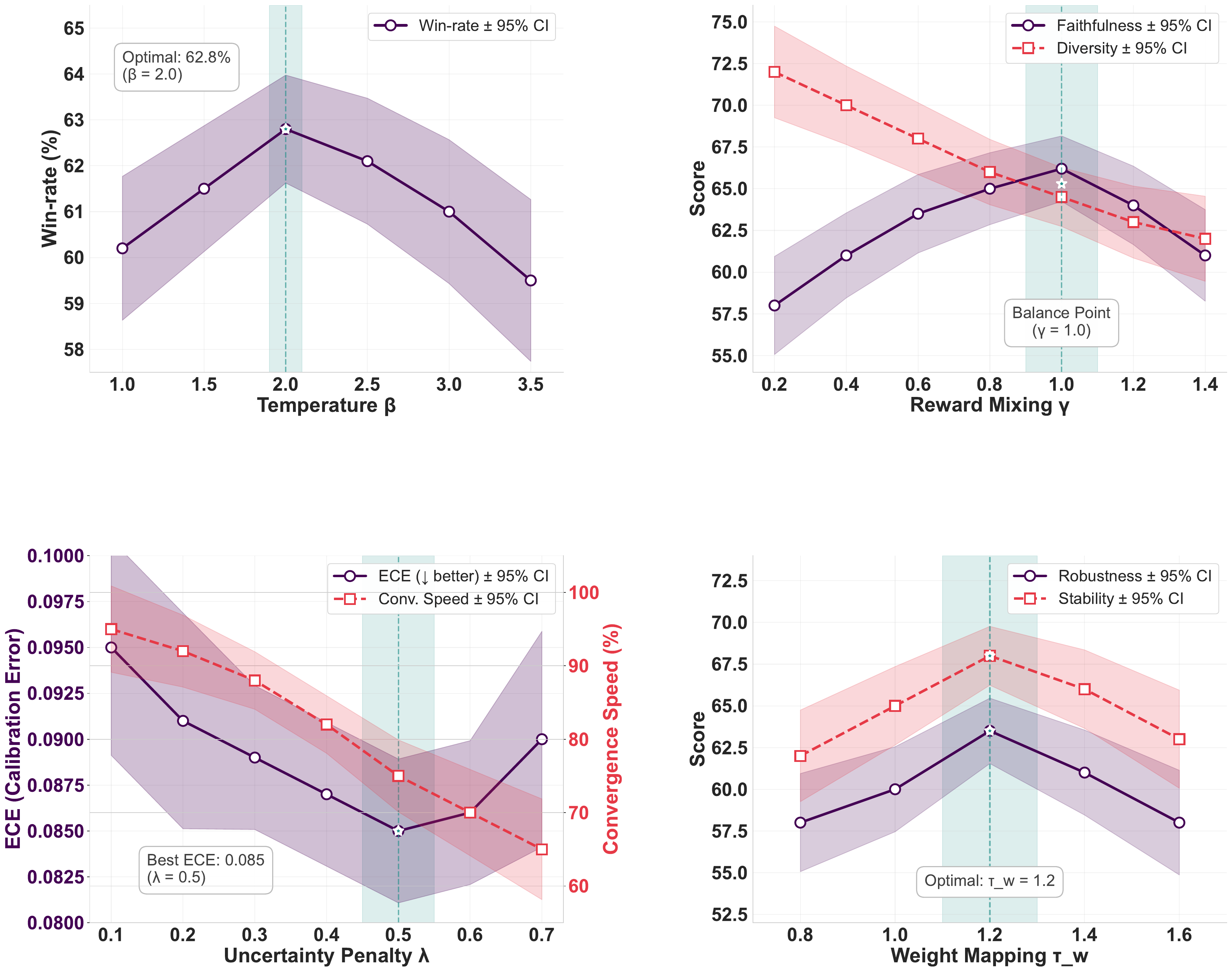}
    \caption*{\textit{(a)} Effect of temperature $\beta$}
\end{minipage}
\hfill
\begin{minipage}{0.47\textwidth}
    \centering
    \includegraphics[width=\linewidth]{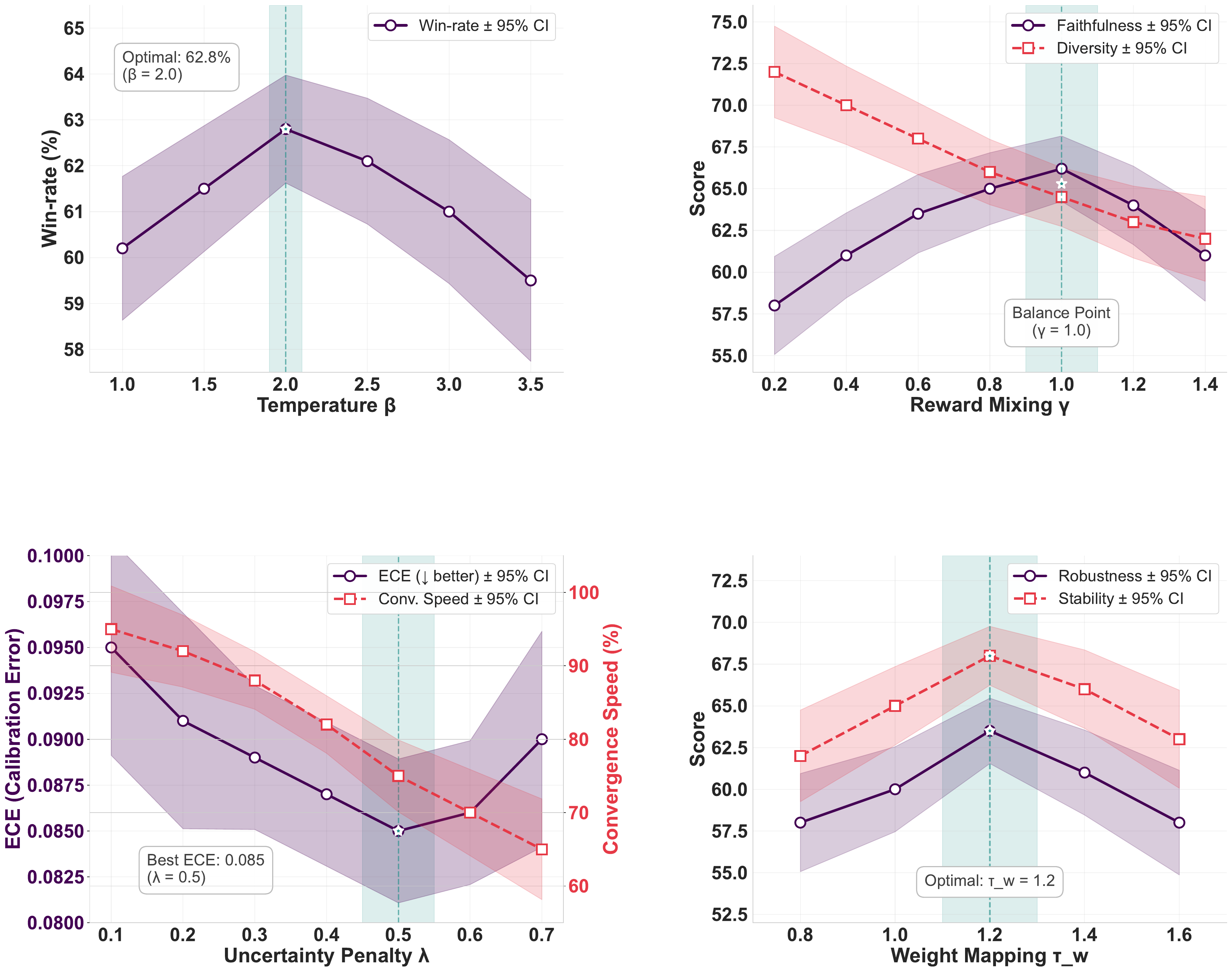}
    \caption*{\textit{(b)} Faithfulness vs Diversity Trade-off}
\end{minipage}

\vspace{0.5em}

\begin{minipage}{0.50\textwidth}
    \centering
    \includegraphics[width=\linewidth]{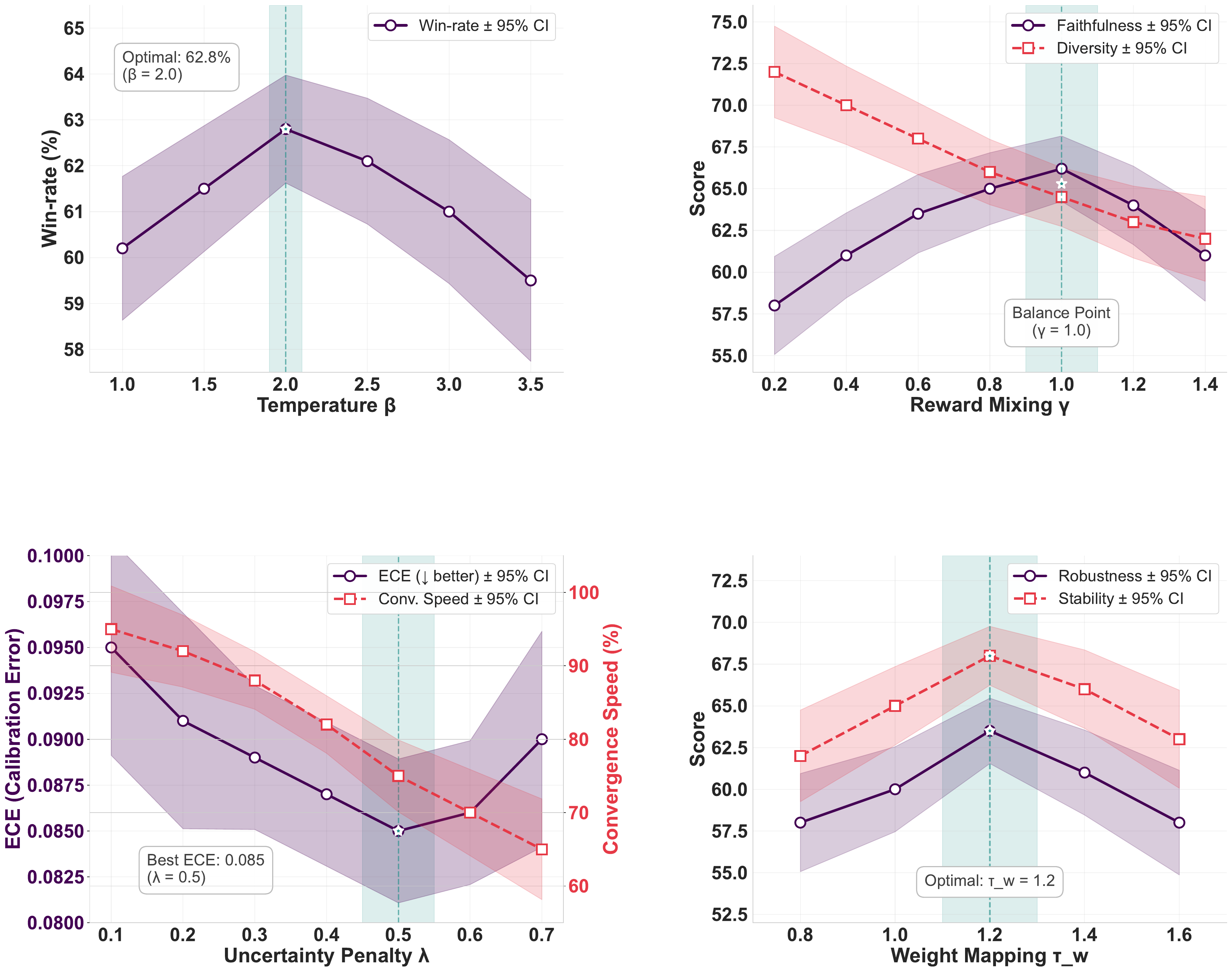}
    \caption*{\textit{(c)} Calibration vs Convergence Speed}
\end{minipage}
\hfill
\begin{minipage}{0.47\textwidth}
    \centering
    \includegraphics[width=\linewidth]{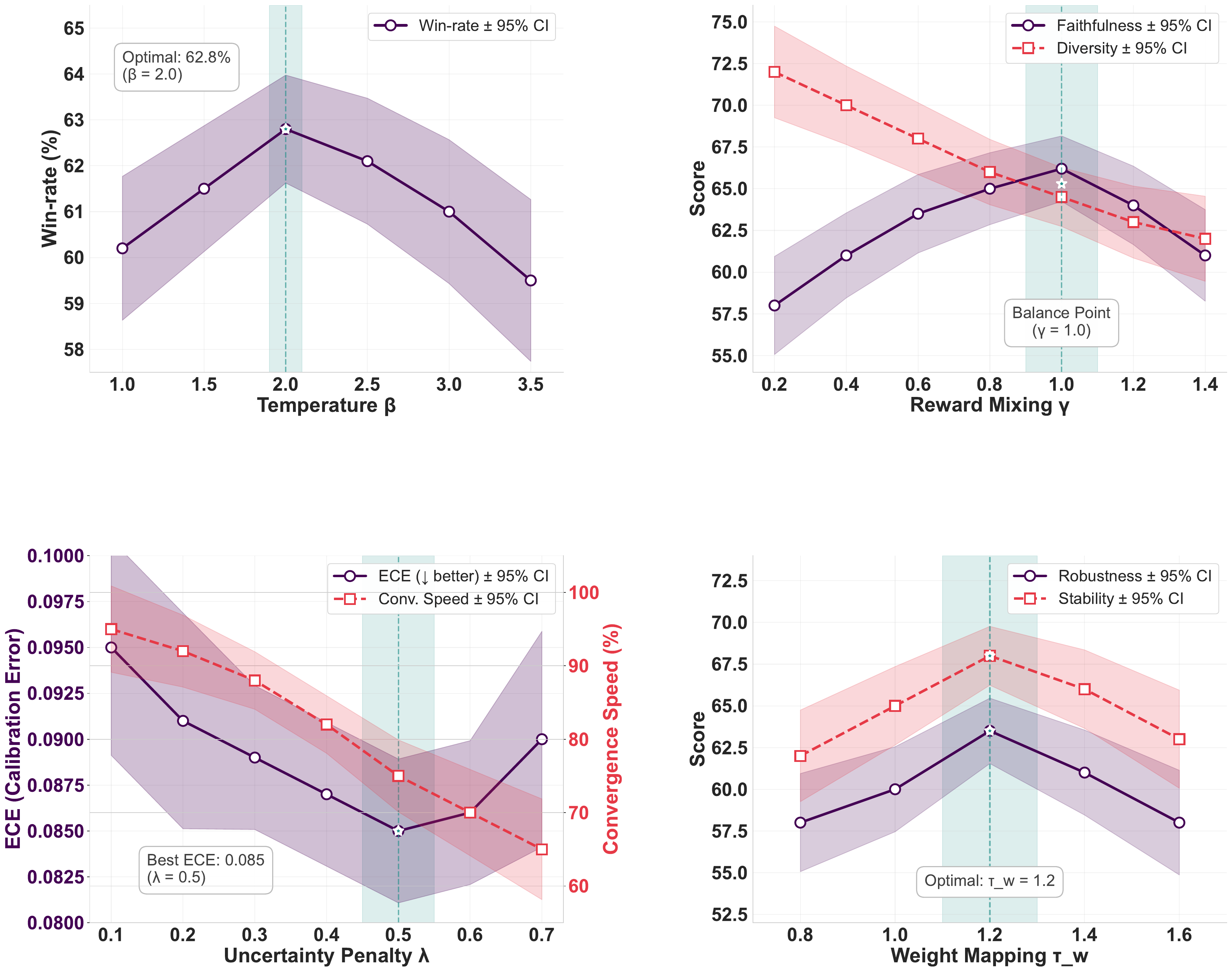}
    \caption*{\textit{(d)} Effect of Weight Mapping $\tau_w$}
\end{minipage}
\caption{Sensitivity to hyperparameters.
(a) Effect of temperature $\beta$ on win-rate, with a broad optimum around $\beta=2.0$.
(b) Reward mixing $\gamma$ illustrating the faithfulness–diversity trade-off, with a balanced operating point near $\gamma=1.0$.
(c) Uncertainty penalty $\lambda$ improving calibration (lower ECE) up to $\lambda=0.5$.
(d) Effect of the weight-mapping parameter $\tau_w$ on robustness and stability.
Across all panels, performance remains stable over wide ranges, indicating that TUR-DPO is not overly sensitive to hyperparameter tuning.}
\label{fig:sensitivity}
\end{figure*}

\begin{table*}[!htbp]
\centering
\caption{Multimodal evaluation: ChartQA and ScienceQA-IMG accuracy, path coverage, and grounding accuracy for LVLMs.}\label{tab:vqa_full}
\begin{tabular}{lcccccc}
\toprule
\multirow{2}{*}{Method} & \multicolumn{3}{c}{ChartQA} & \multicolumn{3}{c}{ScienceQA-IMG} \\
\cmidrule(lr){2-4} \cmidrule(lr){5-7}
 & Acc. (\%) & Path cov. (\%) & Ground. (\%) & Acc. (\%) & Path cov. (\%) & Ground. (\%) \\
\midrule
SFT & 54.2 & 48.5 & 62.3 & 61.8 & 51.2 & 64.7 \\
DPO & 59.7 & 53.1 & 67.8 & 66.3 & 56.4 & 69.2 \\
TUR-DPO & \textbf{63.9} & \textbf{58.9} & \textbf{73.6} & \textbf{69.9} & \textbf{61.3} & \textbf{74.1} \\
\bottomrule
\end{tabular}
\end{table*}

\begin{table*}[!htbp]
\centering
\caption{Long-context multi-hop evaluation: HotpotQA-Long and MuSiQue-Long EM and structural metrics for TUR-DPO vs DPO.}\label{tab:longcontext}
\begin{tabular}{lcccccc}
\toprule
\multirow{2}{*}{Method} & \multicolumn{3}{c}{HotpotQA-Long (2.1k tok)} & \multicolumn{3}{c}{MuSiQue-Long (3.2k tok)} \\
\cmidrule(lr){2-4} \cmidrule(lr){5-7}
 & EM (\%) & Path cov. (\%) & Cycles & EM (\%) & Path cov. (\%) & Cycles \\
\midrule
SFT & 34.7 & 52.3 & 16.2 & 28.9 & 48.7 & 18.5 \\
DPO & 39.5 & 58.6 & 12.4 & 33.6 & 54.2 & 14.1 \\
TUR-DPO & \textbf{43.3} & \textbf{66.1} & \textbf{8.7} & \textbf{37.7} & \textbf{61.8} & \textbf{9.3} \\
\bottomrule
\end{tabular}
\end{table*}

\bibliographystyle{icml2025}
\end{document}